\titleformat*{\section}{\large\bfseries}
\titleformat*{\subsection}{\bfseries}
\DeclareMathOperator{\sign}{sign}
\DeclareMathOperator*{\argmin}{arg\,min}
\DeclareMathOperator*{\arginf}{arg\,inf}
\DeclareMathOperator{\diam}{diam}
\newcommand{\indep}{\perp \!\!\! \perp}
\newcommand{\bs}{\boldsymbol}
\newcommand{\mc}{\mathcal}
\newtheorem{theorem}{Theorem}
\newtheorem{corollary}{Corollary}
\title{Approximation and generalization properties of the random projection classification method
}
\date{}
\author{Evzenie Coupkova \thanks{ecoupkov@purdue.edu}, Mireille Boutin \thanks{mboutin@purdue.edu}}
\begin{document}
\maketitle
\vspace{-0.45in}

\begin{abstract}
The generalization gap of a classifier is related to the complexity of the set of functions among which the classifier is chosen. We study a family of low-complexity classifiers consisting of thresholding a random one-dimensional feature. The feature is obtained by projecting the data on a random line after embedding it into a higher-dimensional space parametrized by monomials of order up to $k$. More specifically, the extended data is projected n-times and the best classifier among those $n$, based on its performance on training data, is chosen. We show that this type of classifier is extremely flexible as, given full knowledge of the class conditional densities, under mild conditions, the error of these classifiers would converge to the optimal (Bayes) error as $k$ and $n$ go to infinity. We also bound the generalization gap of the random classifiers. In general, these bounds are better than those for any classifier with VC dimension greater than $O(\ln n)$. In particular, the bounds imply that, unless the number of projections $n$ is extremely large, the generalization gap of the random projection approach is significantly smaller than that of a linear classifier in the extended space. Thus, for certain classification problems (e.g., those with a large Rashomon ratio), there is a potentially large gain in generalization properties by selecting parameters at random, rather than selecting the best one amongst the class.
\end{abstract}

\section{Introduction}
Consider a two-class classification problem with real-valued feature vectors, where the dimension of the feature vectors is potentially very high. We seek to use training data to construct a classifier.  
This paper is concerned with a family of extremely simple classifiers. Specifically, we analyze a classification method consisting of projecting the data on a random line so to obtain one-dimensional data.
The one-dimensional data is then classified by thresholding, using the training data to determine the best threshold. This is performed $n$ times so to obtain $n$ different classifiers. The best classifier among those $n$, based on performance on the training data, is then chosen. This yields an affine classifier, whose decision boundary is a hyperplane in the original high-dimensional space. More generally, the data can first be expanded to a higher dimensional space, by concatenating the initial feature vector with monomials in these features, before being projected. By considering all monomials up to some order $k\in {\mathbb N}$, the random projection and thresholding procedure yield a non-affine classifier whose decision boundary is the zero set of a polynomial of order $k$ in the original feature space. We call this classification method {\em thresholding after random projection}. It can be viewed as a one-layer neural network whose parameters in the first layer are chosen at random and whose activation function is a hard-threshold sign function, with the number of different projections $n$ corresponding to the width of the hidden layer.

Considering the simplicity of the thresholding after random projection classification meth\-od, Occam's razor principle suggests that such a classifier should be used for any training dataset that can be well classified after a random projection, as one expects the resulting classifier to generalize well. The first part of this paper quantifies the reason why, by showing how the simplicity of this classification method is related to a low probability of classification error. Specifically, Theorem \ref{ntarpgeneralizationterm} provides an upper bound on the likely difference between the training error and the population error, in terms of the size of the training set $N$ and the number of projections $n$. This bound is expressed independently of the space dimension, independently of $k$, and is lower than that for a non-random linear classifier in the dimension of the extended feature space. In Corollary \ref{bound_on_expectation} we provide a similar bound for the expectation of the magnitude of the generalization gap and in Theorem \ref{our_chaining_result} we strengthen our results by applying chaining technique. For an extremely large number of samples $N$ this bound compares very favorably to the bound for any family of classifiers with a VC dimension larger than $O(\ln(n))$. For reasonably large $N$, the classifier we consider has better generalization properties than ones with VC dimension of order $O(\ln(n)/\ln(N))$. 

In section \ref{experiments} we supported our theoretical results by a set of experiments. We compared the generalization gap of the method of thresholding after random projection and other classification methods: logistic regression, linear SVM and SVM with Gaussian kernel on a high-dimensional synthetic dataset. We also compared the performance of thresholding after random projection on different classification tasks on a real dataset of images that represent digits.

A simple classification method is of little use if it has a poor classification performance. In the second part of this paper, we show that, even though the thresholding after random projection classification method is extremely simple, it has a great approximation power. Specifically, it is likely to approximate, to an arbitrary precision, any continuous function on a compact set (Theorem \ref{continuous_by_random_polynom}) as well as any Boolean function on a compact set that splits the support into measurable subsets (Theorem \ref{polynomial_case}). Based on this, we show that its accuracy as a classifier is asymptotically optimal:
given full knowledge of the class conditional probability distributions, with $k$ and $n$ large enough, a classifier that is arbitrarily close to the (optimal) Bayes classifier is likely to be obtained (Corollary \ref{corollary}). 
 We quantify how many projections are needed to achieve a certain accuracy in a general case (Theorem \ref{howbign}). 
In Section \ref{restricted_number} we also include an example of a classification problem where the number of random projections needed for obtaining a classifier with high accuracy is small.
The reason for this is that the Rashomon ratio for this problem is large. This phenomenon is explored further in \cite{coupkova2024rashomon}.


\section*{Related works}
\textbf{Random projection for clustering.} Thresholding after random projection on a one-dimensional subspace has previously been used successfully to cluster high-dimensional data \cite{yellamrajuboutin,hanboutin}. Projections onto a one-dimensional subspace have also been used to develop fast approximate algorithms (e.g., \cite{kushilevitz2000efficient}). In {\cite{kruskal, Friedman1974APP, pptree_lee_cook} random projections were optimized to highlight the clustering properties of a dataset in one or two dimensions. More generally, random projection on a subspace has been used as a pre-processing step to decrease the dimension of high-dimensional data. The Johnson-Lindenstrauss Lemma suggests that one can decrease the space dimension to a much lower one by random projection while closely preserving the original pairwise distances of a dataset, with high-probability. This property is used in \cite{indyk_clustering} for example.\\
\textbf{Random projection for classification.}  Concerning the problem of classification, it has been shown that a dataset featuring two classes separated by a large margin has a high-probability of being well separated by a random linear separator \cite{blum2005random}. More generally, certain datasets are likely to be divided into two well-separated subsets after projection on a random line \cite{boutin2024}. Empirical tests performed in \cite{cannings2017randomprojection} highlighted the excellent performance of classification by random ensembles, in which classification is performed by thresholding the average label of several lower-dimensional projections. Other related results include \cite{Rahimi2007RandomFF}, where random features are designed to preserve inner products of the transformed data and the data in the feature space of a specific kernel. Their generalization properties are studied in \cite{random_features_generalization}. In \cite{simple_classification_binary_data} it was shown that classifying the data using only its random binary representation is a successful strategy. Assuming that we have only limited information about the datapoints - in particular given a random set of hyperplanes we know on which side of each hyperplane the datapoint lies - it is still possible to successfully classify. Another work that combines several results of one-dimensional projections is \cite{dci}, where the ultimate goal is finding k-nearest neighbours. \\  
\textbf{Generalization and approximation properties of classification by random projection.}  The topic of section \ref{complexity} is similar to the one in \cite{durrant_kaban}. Our main focus is different though: we concentrate on providing a bound for the population error based on the training error after random projection, while in \cite{durrant_kaban} the baseline is the training error of the optimal classifier in the original space. Unlike the bound in \cite{durrant_kaban}, our bound is useful for the extreme case of one-dimensional projections, without adding sparsity or separability conditions on the data. Our work is closely related to \cite{cannings2017randomprojection}. We study the simplest case of choosing just one projection direction, but do not have complex assumptions on the distribution of the data. In the generalization aspect, we add a chaining argument to improve the generalization gap and in the approximation aspect we study the properties of the classifier after expanding the features in a polynomial sense. We also show, how large the number of projections has to be to guarantee a certain error given the degree of polynomial extension and dimension of the data.\\
\textbf{Complexity in general.}  A structured approach to complexity of algorithms and its effect on generalization gap was introduced by Chervonenkis and Vapnik in \cite{vapnik_vcdim} through a concept of VC dimension and growth function. It stated that the less complex the family of functions is the tighter is the bound between training and generalization errors of an algorithm. A more nuanced approach in \cite{boosting_margin} considers the effect of large margin on generalization error, claiming that in case of AdaBoost, the more complex the set of classifiers, the bigger is the margin which leads to a smaller generalization gap. In modern machine learning a big question is why deep neural networks achieve a similar performance on the test set as on the training set, while having a very large complexity. A lot of work has been devoted to this topic, for example \cite{recht_rethinking}, \cite{Kawaguchi_2022} or \cite{robust_generalization_difficult}.

\section{Preliminaries}
\subsection*{Notation}

Let us consider a binary classification problem on $d$-dimensional data $\bs{x} \in \mathbb{R}^d$. Training is based on a labeled set $\mathcal{S} = \{(\bs{x}_i,y_i)\}_{i=1}^N$ that consists of $N$ sample points, where $\bs{x}_i \in \mathbb{R}^d$ and $y_i\in \{-1,1\}$ and each pair $(\bs{x}_i, y_i)$ is drawn i.i.d. from the distribution $\mathcal{D}$ with probability measure $\nu(\bs{x},y)$.

We use bold font to denote vectors or matrices. For a matrix $\bs{A}$ and for numbers $i,j \in \{1,...,n\}$, $\bs{A}_{ij}$ stands for the entry on $i$-th row and $j$-th column of the matrix $\bs{A}$. We denote the Euclidean norm by $\|.\|_2$.

\subsection*{Description of the method}

The classification method discussed in this work is based on two parameters chosen a priori: the number of projections $n$ and the degree of polynomial extension $k$. Given these parameters and the dimensionality $d$ of data, we proceed as described in the following steps:
\begin{itemize}
   \item[(1)] expand the data using the polynomial extension of degree $k$ resulting in a dataset with dimensionality $\tilde{d}=\binom{d+k}{k}$: $\{\bs{x}_{i}^{(k)}\}_{i=1}^{N}$;
   
    \item[(2)] generate $n$ random projection directions $\bs{a}^{(j)}$, $j=1,\ldots, n$, according to a uniform distribution on a unit hypersphere in $\mathbb{R}^{\tilde{d}}$: $\boldsymbol{a}^{(j)} \sim \text{Unif}\left(S^{\tilde{d}-1}\right)$;
 
    \item[(3)] project the $\tilde{d}$-dimensional dataset $\{\bs{x}_{i}^{(k)}\}_{i=1}^{N}$ onto one dimension using each projection direction $\bs{a}^{(j)}$, thus creating one-dimensional data:
    $\{\tilde{x}_{ij}^{(k)}= \bs{a}^{(j)} \cdot \bs{x}_i^{(k)}\}_{i=1}^{N}$;

    \item[(4)] classify each dataset $\{\tilde{x}_{ij}^{(k)}\}_{i=1}^{N}$ using a threshold $t_j$, measure the accuracy of such classification and choose the projection direction $\bs{a}^{(j)}$ with maximum accuracy. 
\end{itemize}

The family of classifiers given by such method is a random subset of affine classifiers given by 
\begin{align}\label{hypothesis_Set_definition}
    \mc{F}_{n,k} = \{f(\bs{x}) = \sign\left(\bs{a}^{(j)} \cdot \bs{x}^{(k)}+t\right),\bs{a}^{(j)} \in \{\bs{a}^{(1)},..., \bs{a}^{(n)}\}, t\in \mathbb{R}^{N}\}.
\end{align}
The set $\mathcal{F}_{n,k}$ is infinite, due to the unspecified value of the threshold $t$, but it is only one dimension, as the projection directions  $\bs{a}^{(j)}$ are fixed.

\section{Generalization properties}\label{complexity}

In this section we focus on the generalization gap of the method of thresholding after random projection. 
Our first result is Theorem \ref{ntarpgeneralizationterm}, which provides an upper bound on the probability that the absolute value of the difference between the training error and the population error is larger than a given $\varepsilon$. It is obtained by splitting the set of functions from which we choose a classifier into independent subsets. We also utilize the fact that the classification is carried out in one dimension, which greatly reduces the size of the set of all possible partitions. In Corollary \ref{bound_on_expectation} we derive a similar bound for the expectation of the magnitude of the generalization gap. As a result, for quite large values of $n$, we get a much tighter bound on the generalization gap than the one given by the VC dimension of classes of functions with VC dimension as low as $2$. We advance these results in Theorem \ref{our_chaining_result} by applying the chaining technique and are subsequently able to compare the asymptotic behaviour of the bound that we propose with the classical one when number of samples converge to infinity. \par

The task of constructing a classifier can be viewed as choosing a hypothesis from a hypothesis set $\mathcal{F}$, based on a training dataset $\mathcal{D} = \{ (\boldsymbol{x_i},y_i) \}_{i=1}^N$, where $\boldsymbol{x_i}$ contains the real-valued features of point $i$, $y_i$ is the class of point $i$, and $N$ is the number of points. The set $\mathcal{F}$ needs to be rich enough to approximate the optimal solution well, but not too rich, with respect to $N$, or else the generalization error of the chosen classifier may turn out to be much different from the training error. 

The training error is defined in the following way:
\begin{align}
    R_{\text{train}}^{\mathcal{D}}(f) = \frac{1}{N}\sum_{i=1}^N \mathbbm{1}(f(\boldsymbol{x}_i)\neq y_i)
\end{align}
and a common technique of deciding which hypothesis from $\mathcal{F}$ to choose is based on minimizing the training error (the Empirical Risk Minimization method or ERM). The method of thresholding after random projection also uses this approach and the final classifier $\hat{f}$ is $\hat{f} = \argmin_{f\in \mathcal{F}} R_{\text{train}}^{\mathcal{D}}(f)$. The ultimate goal, however, is to find a classification that will be accurate on a new data set. In other words, one would like to accurately predict the class of data points that do not belong to the training set. Ideally, the error of the classifier on a new data set would be likely to be close to the training error. But in actuality, it might likely be much larger. If we assume that our data are generated by a probability distribution $\rho_{\boldsymbol{X},Y}(\boldsymbol{x},y)$ on a certain space $\mathcal{E} \times \{-1,1\}$, we can compute the overall {\em population error} of the classifier, that is to say the error that the classifier would make if it were used to classify all of the points in $\mathcal{E}\times \{-1,1\}$:
\begin{align}\label{population_error}
    R_{\text{popul}}(f) &= 
    \int_{\mathcal{E}\times\{-1,1\}}\rho_{\boldsymbol{X}, {Y}}(\boldsymbol{x}, y)\mathbbm{1}\left(f(\boldsymbol{x})\neq y\right)d(\boldsymbol{x},y).
\end{align}
We want to guarantee that applying the classifier on a new data set will likely yield an error similar to the training error. More specifically, we want that, for any given level of tolerance $\delta \in (0,1)$, with probability at least $1-\delta$, for any function $f\in \mathcal{F}$ the difference between the population error and the training error is less than some small generalization term $R_{\mathcal{F}}$: 
\begin{align*}
    P\left(\sup_{f \in \mathcal{F}}|R_{\text{popul}}(f) - R_{\text{train}}^{\mathcal{D}}(f)| \leq R_{\mathcal{F}}\right) >1-\delta.
\end{align*}
We call the quantity 
\begin{align}\label{generalization_gap}
    \sup_{f \in \mathcal{F}}|R_{\text{popul}}(f)-R_{\text{train}}^{\mathcal{D}}(f)|
\end{align}
generalization gap that corresponds to the family of classifiers $\mathcal{F}$. The generalization term $R_{\mathcal{F}}$ may be expressed as a function that depends on the number of points in the training set $N$, the tolerance level $\delta$ and the richness of the family of functions $\mathcal{F}$: $R_{\mathcal{F}}=R_{\mathcal{F}}(\delta, N)$.

When our task is binary classification, the richness of the family $\mathcal{F}$ can be measured by a function that computes how many different outcomes can be achieved on dataset of size $N$ if classifiers from $\mathcal{F}$ are used. This function is called a growth function and is denoted by $m_{\mathcal{F}}(N)$, where $    m_{\mathcal{F}}(N) = \max_{\boldsymbol{x}_1, ..., \boldsymbol{x}_N \in \mathbb{R}^d}|\{(f(\boldsymbol{x}_1), ..., f(\boldsymbol{x}_N)), f\in \mathcal{F}\}|$. Since functions $f\in\mathcal{F}$ are Boolean, the value of $m_{\mathcal{F}}(N)$ is upper bounded by $2^N$ for each $N$. We call the vectors $(f(\boldsymbol{x}_1), ...,f(\boldsymbol{x}_N)))$ dichotomies and denote by $\mathcal{A}_{\mathcal{F}}(\boldsymbol{x}_1,...,\boldsymbol{x}_N)=\mathcal{A}_{\mathcal{F}}(\boldsymbol{x}^N)$ the set of dichotomies that a class $\mathcal{F}$ is able to produce on the set of points $\boldsymbol{x}_1, .., \boldsymbol{x}_N$. Using this notation we can also define growth function as
\begin{align*}
    m_{\mathcal{F}}(N) = \max_{\boldsymbol{x}_1, ..., \boldsymbol{x}_N \in \mathbb{R}^d}|\mathcal{A}_{\mathcal{F}}(\boldsymbol{x}_1, ..., \boldsymbol{x}_N)\}|.
\end{align*}
In terms of growth function, we can bound the generalization gap of $\mathcal{F}$ by $R_{\mathcal{F}}(\delta, N) = \sqrt{\frac{8}{N}\ln\left(\frac{4m_{\mathcal{F}}(2N)}{\delta}\right)}$.
If the set $\mathcal{F}$ has a finite VC dimension $d_{VC}$, according to Sauer-Shelah lemma, its growth function is bound by a polynomial in $N$:
\begin{align*}
m_{\mathcal{F}}(N) \leq \sum_{i=0}^{d_{VC}} \binom{N}{i} &\leq  \left(\frac{Ne}{d_{VC}}\right)^{d_{VC}}.
\end{align*}
Replacing the growth function by these quantities one gets
\begin{align}\label{VC_ineq_tight}
      R_{\mathcal{F}}(\delta, N)\leq \sqrt{\frac{8}{N}\ln\left({\frac{4\left(\frac{2eN}{d_{VC}}\right)^{d_{VC}}}{\delta}}\right)}.
\end{align} 
In particular, the VC-dimension of the class of affine functions $\mathcal{F}_{\text{affine}}$ applied to classify points in $\mathbb{R}^d$ is $d+1$ \cite{AbuMostafa2012LearningFD}. Therefore, we have
\begin{align}\label{vc_ineq_ntarp}
 R_{\mathcal{F_\text{affine}}}(\delta, N) \leq \sqrt{\frac{8}{N}\ln\left({\frac{4\left(\frac{2eN}{d+1}\right)^{d+1}}{\delta}}\right)}.
\end{align}

Let us consider the family of classifiers that corresponds to the thresholding after random projection method that uses $n\in \mathbb{N}$ projections after expanding the feature space by a polynomial transformation of order $k\in \mathbb{N}$ denoted by $\mathcal{F}_{n,k}$. The class $\mathcal{F}_{n,k}$ depends on the projection directions that are generated in the first step, hence it is a random set of functions. Its growth function $m_{\mathcal{F}_{n,k}}(N)$ can be upper bounded by a quantity that depends on the number of projections only. To prove the bound, we do not use randomness of the classifier, merely the fact that the set of all projection directions considered is limited to $n$. The randomness is used in Section \ref{optimality}, where it guarantees density of the family of classifiers $\mathcal{F}_{n,k}$ in the set of Bayes classifiers that split the support into two measurable sets. 
 We are going to use this bound in the proof of the following Theorem \ref{ntarpgeneralizationterm}. The results in this section hold under general conditions, we do not pose any assumptions on the distribution of the data or on the configuration of the training set.

 In the following theorems there are two levels of randomness that we need to consider. The first one comes from the random pick of projection direction. The second one comes from the random pick of data. The following argument shows that one level of randomness that stems from the random choice of the projection directions does not affect our bounds. This is due to the fact that the same inequalities hold for any random pick of $n$ projection directions $(\bs{a}^{(1)},...,\bs{a}^{(n)})$. Let us denote the $n$-tuple of projection directions $(\bs{a}^{(1)},...,\bs{a}^{(n)})$ by $\bs{a}_d^{(n)}$, then we have:
\begin{align}\label{condition}
\begin{split}
     P\left(\sup_{f \in \mathcal{F}_{n,k}}\vert R_{\text{popul}}(f)- R_{\text{train}}^{\mathcal{D}}(f) \vert \leq \Delta\right) =\\
     =\int_{\bs{a}^{(1)}\in \mc{S}^{d-1}}...\int_{\bs{a}^{(n)}\in \mc{S}^{d-1}} P\left(\sup_{f \in \mathcal{F}_{n,k}}|R_{\text{popul}}(f)- R_{\text{train}}^{\mathcal{D}}(f)| \leq \Delta \mid  \bs{a}_d^{(n)}\right) P\left(\bs{a}_d^{(n)}\right)d\left(\bs{a}_d^{(n)}\right)
\end{split}
\end{align}
and if we know that 
\begin{align*}
     P\left(\sup_{f \in \mathcal{F}_{n,k}}|R_{\text{popul}}(f)- R_{\text{train}}^{\mathcal{D}}(f)| \leq \Delta \mid  \bs{a}_d^{(n)}\right) \geq \beta
\end{align*}
we can conclude that the left hand side of \ref{condition} is larger than $\beta$ as well:
\begin{align*}
     P\left(\sup_{f \in \mathcal{F}_{n,k}}\vert R_{\text{popul}}(f)- R_{\text{train}}^{\mathcal{D}}(f) \vert \leq \Delta\right)\geq \beta\int_{\bs{a}^{(1)}\in \mc{S}^{d-1}}...\int_{\bs{a}^{(n)}\in \mc{S}^{d-1}}P\left(\bs{a}_d^{(n)}\right)d\left(\bs{a}_d^{(n)}\right) = \beta.
\end{align*}
 
\begin{theorem}\label{ntarpgeneralizationterm}
Let us consider a family $\mathcal{F}_{n,k}$ of classifiers that correspond to the method of thresholding after random projection. Set the tolerance level $\delta>0$. Then we have that
\begin{align}\label{ntarp_best_bound}
    P\left(\sup_{f \in \mathcal{F}_{n,k}}|R_{\text{popul}}(f)- R_{\text{train}}^{\mathcal{D}}(f)| \leq \sqrt{\frac{8}{N}\ln\left({\frac{16\;n\;N}{\delta}}\right)}\right) \geq 1-\delta.
\end{align}
Notice that the bound on the generalization gap does not depend on the order of extension $k$.
\end{theorem}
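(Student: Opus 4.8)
The key structural fact I would exploit is the one the authors themselves flag: although $\mathcal{F}_{n,k}$ has infinite cardinality and data-dependent VC dimension, its growth function admits a universal bound. The plan is to reduce the growth function of the whole family to $n$ copies of the growth function of a single one-dimensional thresholding classifier, and then plug the resulting bound into the standard generalization inequality $R_{\mathcal{F}}(\delta,N)=\sqrt{\tfrac{8}{N}\ln\!\left(\tfrac{4m_{\mathcal{F}}(2N)}{\delta}\right)}$ stated earlier in the excerpt.

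First I would analyze a single projection direction. For a fixed random line, the classifier is obtained by thresholding the projected scalar values $\langle w, \Phi_k(\boldsymbol{x}_i)\rangle$, where $\Phi_k$ is the monomial embedding. The crucial observation is that thresholding in one dimension can produce very few dichotomies: on $N$ points, a single threshold (with a choice of which side is labeled $+1$) sorts the projected values along the real line, and the number of distinct labelings obtainable by cutting a sorted list of $N$ points at a single threshold is at most $2N$ (there are $N+1$ gaps between/around the sorted points and two sign conventions, giving roughly $2(N+1)$, which I would bound cleanly by $2N$ after accounting for coincidences — this is exactly why the factor of the form $m(2N)$ appears). This is the step that makes the one-dimensionality pay off: it collapses what would be an exponential number of labelings down to a linear count.

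Next I would combine the $n$ independent projections. Since the final classifier is chosen as the best among the $n$ one-dimensional thresholding classifiers, the set of dichotomies producible by $\mathcal{F}_{n,k}$ is contained in the union of the dichotomy sets of the $n$ individual classifiers. Hence the growth function is subadditive across the projections, $m_{\mathcal{F}_{n,k}}(N)\le n\cdot 2N$, uniformly over every random draw of the directions and \emph{independently of $k$}, because the embedding $\Phi_k$ only changes \emph{which} scalar values are projected, never the combinatorial count of one-dimensional threshold labelings. Substituting $m_{\mathcal{F}_{n,k}}(2N)\le n\cdot 2(2N)=4nN$ into the generalization inequality gives $\sqrt{\tfrac{8}{N}\ln\!\left(\tfrac{4\cdot 4nN}{\delta}\right)}=\sqrt{\tfrac{8}{N}\ln\!\left(\tfrac{16\,n\,N}{\delta}\right)}$, which is precisely the claimed bound, and the $\ge 1-\delta$ confidence is inherited directly from the inequality.

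The main obstacle I anticipate is making the one-dimensional counting argument fully rigorous, in particular pinning down the exact constant. One must be careful about whether the threshold rule includes both sign orientations and whether ties among projected values are handled, since an off-by-a-constant here propagates into the $16$ inside the logarithm; the generous bound $2N$ per projection is what cleanly yields $16nN$ after doubling $N$ and multiplying by the $4$ in the base inequality. A secondary subtlety is the uniformity of the bound over the random directions: because the argument bounds $m_{\mathcal{F}_{n,k}}(N)$ deterministically for \emph{every} realization of the projections, the probabilistic statement holds unconditionally, and no separate averaging over the randomness of the lines is needed — the randomness is already absorbed once we have a worst-case growth-function bound that holds pointwise in the directions.
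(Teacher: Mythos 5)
Your proposal is correct and follows essentially the same route as the paper's own proof: splitting $\mathcal{F}_{n,k}$ into the $n$ per-projection threshold families, bounding each one-dimensional family's dichotomy count by $2N$ (the paper likewise exploits that this beats the Sauer--Shelah bound for VC dimension $2$), concluding $m_{\mathcal{F}_{n,k}}(2N)\leq 4nN$ uniformly over every realization of the projection directions and independently of $k$, and substituting into the standard inequality $\sqrt{\tfrac{8}{N}\ln\left(\tfrac{4m_{\mathcal{F}}(2N)}{\delta}\right)}$. Your added care about the $2(N+1)$-versus-$2N$ count (the two constant labelings are produced by both sign orientations, so the distinct total is $2N$) and your remark that the pointwise growth-function bound absorbs the randomness of the directions are both sound refinements of points the paper treats more briefly.
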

\begin{proof}
 The hypothesis set of the method of thresholding after random projection with $n$ iterations can be split into $n$ independent subsets:
\begin{align*}
    \mathcal{F}_{n,k} = \bigcup_{i=1}^n \mathcal{F}_{n,k}^{(i)},
\end{align*}
here each $\mathcal{F}_{n,k}^{(i)}$ is the family of classifiers that we choose from after $i-$th projection. If $\boldsymbol{a}_i$ is the $i$-th projection direction then $\mathcal{F}_{n,k}^{(i)} = \{\sign\left(\sigma \boldsymbol{a}_i\cdot\phi_k(\boldsymbol{x})+\tau\right), \tau \in \mathbb{R}, \sigma \in \{-1,1\}\}$.

We now count how many different outcomes (i.e., dichotomies) are possible if we use the hypothesis set $\mathcal{F}_{n,k}$ on the dataset of size $N$. The set $\mathcal{F}_{n,k}$ depends on a random choice of the projection directions $\{\boldsymbol{a}_1,...,\boldsymbol{a}_n\}$, but for each such $n$-tuple of vectors, the same upper bound on the number of different dichotomies holds. When data are projected in some direction, they are arranged in a certain order on a line. Then a threshold is chosen in one dimension that separates the points into two classes: to the left of the threshold and to the right of the threshold. We can choose which class is on the left, which class is on the right.

Thus, each $\mathcal{F}_{n,k}^{(i)}$ gives us $2N$ different possible dichotomies on $N$ points. That corresponds to projection in one particular random direction. If we choose a different projection direction, the order of the points might become different, which gives us different dichotomies. At most, we can get $2N$ more dichotomies with each projection. Therefore the number of different outcomes for $N$ points is no more than $2nN$. Here we take advantage of the fact that Sauer-Shelah lemma (see \cite{sauer} for example) is not tight for classification by linear separation in one dimension, which has VC dimension equal to $2$. That is due to the fact that for $N>2$
\begin{align*}
    \sum_{i=0}^2 \binom{N}{i} > 2N.
\end{align*}

From \cite{vapnik71uniform} and \cite{AbuMostafa2012LearningFD} we know that 
\begin{align*}
    \sup_{f \in \mathcal{F}}|R_{\text{popul}}(f)- R_{\text{train}}^{\mathcal{D}}(f)| \leq \sqrt{\frac{8}{N}\ln\left({\frac{4m_{\mathcal{F}}(2N)}{\delta}}\right)}.
\end{align*}
Since $m_{\mathcal{F}}(2N) \leq 4nN$, we get the desired result.
\end{proof}
While the upper bound in (\ref{first_bound}) converges to infinity with $n$ increasing, there exists a better bound for excessively large $n$. That is due to the fact that the class of functions given by $n$ random projections is not richer than the class of affine functions used as classifiers whose VC dimension is $d+1$ (where $d$ is the dimension of the space, in which we generate the random projection directions). Therefore we get 
\begin{align*}
\sup_{f \in \mathcal{F}}|R_{\text{popul}}(f)- R_{\text{train}}^{\mathcal{D}}(f)| \leq \sqrt{\frac{8}{N}\ln\left({\frac{4\min\Big\{4nN,\left(\frac{2Ne}{d+1}\right)^{(d+1)} \Big\}}{\delta}}\right)}
\end{align*}
And if
\begin{align}\label{too_big_n}
    n > \frac{1}{4N}\left(\frac{2Ne}{d+1}\right)^{d+1},
\end{align}
the classical bound for the generalization gap of affine functions is tighter than the bound that we propose. Number of projections $n$ has to be very large to overcome this bound though. For illustration, if the number of samples is equal to $1000$ then the bound that $n$ has to overcome is 
$1800$ for $d=2$, $1.5\times10^6$ for $d=3$ and $3.8\times10^{11}$ for $d=5$. With larger number of samples, this threshold grows as well as with the larger number of dimensions.

\subsection{Bound on the expected value of the generalization gap}
A similar bound is true for the expectation of the absolute value of the difference between the training and population errors. In this case, the parameter $\delta$ is not present, our bound depends only on the sample size $N$ and number of random projections $n$.

\begin{corollary}\label{bound_on_expectation}
Let us consider a family  $\mathcal{F}_{n,k}$ of classifiers that correspond to the method of thresholding after random projection. We have that
\begin{align*}
E\left(\sup_{f\in \mathcal{F}_{n,k}}|R_{\text{train}}^{\mathcal{D}}(f)-R_{\text{popul}}(f)|\right) \leq \sqrt{\frac{2\ln(8nN)}{N}}.
\end{align*}
\end{corollary}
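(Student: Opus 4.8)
The plan is to bound the expected supremum by a ghost-sample symmetrization followed by Massart's finite-class lemma, feeding in the growth-function estimate $m_{\mathcal{F}_{n,k}}(2N)\leq 4nN$ already established in the proof of Theorem \ref{ntarpgeneralizationterm} (on $N$ points one gets at most $2nN$ dichotomies, hence at most $4nN$ on $2N$ points). First I would introduce an independent ghost dataset $\mathcal{D}'=\{(\boldsymbol{x}_i',y_i')\}_{i=1}^N$ drawn from the same law $\rho_{\boldsymbol{X},Y}$, so that $R_{\text{popul}}(f)=E_{\mathcal{D}'}(R_{\text{train}}^{\mathcal{D}'}(f))$. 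Writing $\ell_f(\boldsymbol{x},y)=\mathbbm{1}(f(\boldsymbol{x})\neq y)$ and applying Jensen's inequality to pull the inner expectation out through both the absolute value and the supremum, I obtain
\[
E\left(\sup_{f\in\mathcal{F}_{n,k}}|R_{\text{train}}^{\mathcal{D}}(f)-R_{\text{popul}}(f)|\right)\leq E_{\mathcal{D},\mathcal{D}'}\left(\sup_{f\in\mathcal{F}_{n,k}}\left|\frac{1}{N}\sum_{i=1}^N\big(\ell_f(\boldsymbol{x}_i,y_i)-\ell_f(\boldsymbol{x}_i',y_i')\big)\right|\right).
\]
Since the pairs $(\boldsymbol{x}_i,y_i)$ and $(\boldsymbol{x}_i',y_i')$ are identically distributed, swapping the $i$-th pair leaves the joint law unchanged, so inserting independent Rademacher signs $\sigma_i\in\{-1,+1\}$ does not change the expectation, producing the symmetrized quantity $E_{\mathcal{D},\mathcal{D}',\sigma}\big(\sup_f|\tfrac1N\sum_i\sigma_i(\ell_f(\boldsymbol{x}_i,y_i)-\ell_f(\boldsymbol{x}_i',y_i'))|\big)$.

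The crucial step is to keep each paired difference $\ell_f(\boldsymbol{x}_i,y_i)-\ell_f(\boldsymbol{x}_i',y_i')$ intact rather than splitting it with the triangle inequality. For each $f$ this yields a vector $\boldsymbol{v}_f\in\{-1,0,1\}^N$, and $\boldsymbol{v}_f$ depends on $f$ only through the labels it assigns to the $2N$ points of $\mathcal{D}\cup\mathcal{D}'$. Hence, conditionally on the samples, the supremum over the infinite class $\mathcal{F}_{n,k}$ collapses to a maximum over at most $m_{\mathcal{F}_{n,k}}(2N)\leq 4nN$ distinct vectors. For each fixed $\boldsymbol{v}_f$ the Rademacher sum $\tfrac1N\sum_i\sigma_i v_{f,i}$ is a zero-mean subgaussian variable with variance proxy $\|\boldsymbol{v}_f\|^2/N^2\leq N/N^2=1/N$, because at most $N$ of its entries are nonzero and each is $\pm 1$.

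Finally I would invoke Massart's lemma for the expected maximum of the absolute values of $M$ subgaussian variables with variance proxy $\tau^2$, namely $E(\max_j|W_j|)\leq\tau\sqrt{2\ln(2M)}$, with $M\leq 4nN$ and $\tau^2=1/N$. This gives the bound $\sqrt{\tfrac{2\ln(8nN)}{N}}$ conditionally on the samples, and taking the outer expectation over $\mathcal{D},\mathcal{D}'$ preserves it. I expect the main obstacle to be obtaining this clean constant: the routine symmetrization-then-triangle-inequality argument loses a factor of two, so one must treat each paired difference as a single coordinate valued in $\{-1,0,1\}$, which keeps the Rademacher sum subgaussian with parameter one and funnels exactly $8nN=2\,m_{\mathcal{F}_{n,k}}(2N)$ into the logarithm.
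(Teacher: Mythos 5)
Your proof is correct and takes essentially the same route as the paper: the paper's proof simply invokes Theorem 1.9 of \cite{Lugosi2002} (which is exactly the ghost-sample symmetrization plus finite-class maximal inequality you re-derive, yielding $\sqrt{2\ln(2m_{\mathcal{F}}(2N))/N}$) and plugs in the growth-function bound $m_{\mathcal{F}_{n,k}}(2N)\leq 4nN$ from Theorem \ref{ntarpgeneralizationterm}. Your unfolding of that cited lemma is sound, including the key point of keeping each paired difference as a single $\{-1,0,1\}$-valued coordinate so that the variance proxy is $1/N$ and the constant $8nN = 2\cdot 4nN$ comes out exactly.
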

\begin{proof}
A Theorem 1.9 from \cite{Lugosi2002} gives 
\begin{align*}
E\left(\sup_{f\in \mathcal{F}}|R_{\text{train}}^{\mathcal{D}}(f)-R_{\text{popul}}(f)|\right) \leq \sqrt{\frac{2\ln(2m_{\mathcal{F}}(2N))}{N}},
\end{align*}
where $m_{\mathcal{F}}(2N)$ is a growth function of a class of functions $\mathcal{F}$ on $2N$ points. Since for the class of functions given by classification after random projection $\mathcal{F}_{n,k}$ the growth function on $2N$ points is always less or equal than $4nN$ we get the stated result. As in the previous Theorem, if $n$ is excessively large, the classical bound that uses VC dimension of affine functions is tighter than the one we propose. The magnitude of such $n$ can be calculated using the same formula: (\ref{too_big_n}).
\end{proof}

\subsection{Illustration of advantage in generalization gap for various numbers of projections}

 Unlike the VC dimension estimate on the generalization gap, the bound that we propose does not depend on the VC dimension, but on the number of projections $n$. In Figure \ref{fig:comparisonwithdvc} we compare the values of the bound for the expected value of the generalization gap for the method of thresholding after random projection for $n$ varying between $1$ and $1000$ with the estimate given by a VC dimension for a method with $d_{VC}$ equal to $2$ and $3$. Our estimate grows with the number of projection directions applied, but the growth is logarithmic and for all values of $n$ considered in the graph and the thresholding after random projection method is better by nearly half/at least one and a half percentage points, respectively.

\begin{figure}[ht!]
    \centering
    \includegraphics[scale=0.7]{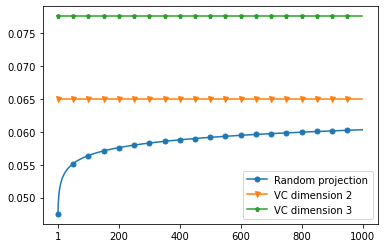}
    \caption{A comparison between the bounds for the expectation of the generalization gap of the method of thresholding after $n$ random projections and an algorithm with VC dimension $d_{VC}=2$ and $d_{VC}=3$, respectively. The graph is generated under the assumption that $N=10000$.}
    \label{fig:comparisonwithdvc}
\end{figure}

\subsection{Comparison with linear separation when dimensionality increases}
If the feature data is $d$-dimensional, one can build a classifier by picking the {\em best} hyperplane following some goodness-of-fit criterion. As stated earlier, the VC dimension of this classification method is $d+1$. This is one of the simplest classification methods. Yet, for the method of thresholding after random projection, we have provided a bound on the generalization gap that is smaller than that for a classification method with $d_{VC}=d+1$ for any $d\geq 1$ as long as $n$ is not too large. 

To illustrate the difference, let us compare the generalization term for the method of thresholding after random projection given by our estimate and the generalization term given by an estimate that uses VC dimension $d_{VC}=d+1$. We assume that we do not need to extend the feature space ($k=1$) and fix $n$ to reach the optimal training error as discussed in Section \ref{optimality} as per Formula (\ref{howlargen}). However, since the bound on $n$ is overly conservative, the optimal error might be achieved with a smaller $n$. As we can see from Table \ref{linearseparationtable}, the generalization advantage of the method of thresholding after random projection is present even for 2-dimensional data. For data in 10 dimensions, the estimate for the generalization gap of the linear separation algorithm is almost twice as large as the estimate for the generalization gap of the method of thresholding after random projection.

\begin{table}[ht!]
    \centering
    \begin{tabular}{c c c c c}
        $d$ & $n$ & $R_{\mathcal{F}_{n,1}}$ & $R_{\mathcal{F}_{\text{affine}}}(d+1)$ \\
         $2$ & $25$ & $0.118$ & $0.163$\\
         $3$ & $117$ & $0.123$ &  $0.183$\\
         $4$ & $592$ & $0.129$ &  $0.200$\\
         $5$ & $3278$ & $0.134$ &  $0.216$\\
         $6$ & $19664$ & $0.139$ &  $0.230$ \\
         $7$ & $126414$ & $0.144$ & $0.244$ \\
         $8$ & $863981$ & $0.150$ & $0.256$ \\
         $9$ & $6236483$ & $0.155$  &$0.268$\\
         $10$ & $47292177$ & $0.160$ &$0.279$
    \end{tabular}
    \caption{Comparing the bounds on the generalization gaps, $R_{\mathcal{F}_{n,1}}$ for the method of thresholding after random projection that uses $n$ projections and $R_{\mathcal{F}_{\text{affine}}}(d+1)$ for linear classification that has VC dimension of $d+1$. We set $n$ to be greater than the bound stated in Theorem \ref{howbign} using 
     $\delta=0.1$ and $N=10000$, and we assume that $k=1$ is enough for a good classification result.}
    \label{linearseparationtable} 
\end{table}

\subsection{Application of chaining technique}
There exists another bound on the expectation of the generalization gap that is better than the one discussed previously for very large $N$ (therefore it is better asymptotically, when $N$ converges to infinity). The bound is obtained using chaining technique (described in \cite{Dudley_chaining}) and eradicates the logarithmic term in the number of samples from the numerator:
\begin{align}\label{classical_chaining_bound}
E\left(\sup_{f \in \mathcal{F}}|R_{\text{popul}}(f)-R_{\text{train}}^{\mathcal{D}}(f)|\right) \leq 65.16\sqrt{\frac{d_{VC}}{N}},    
\end{align}
where $d_{VC}$ is the VC dimension of the class $\mathcal{F}$. In this section we prove a similar result for the generalization gap of the method of thresholding after random projection where we roughly speaking replace the VC dimension by $\ln(n)$ term. So far we used the exact number of different dichotomies that the method of random projections applied $n$ times can produce on a fixed number of datapoints. There is another property of the set of dichotomies given by the method of random projections $\mathcal{A}_{\mathcal{F}_{n,k}}(\boldsymbol{x}^N)$ which we have not used yet. This property allows for a different bound on the generalization gap, where we also eliminate the logarithmic term in the number of samples from the numerator. Our class of functions is not only small in magnitude, it also has a simple geometric structure that results in low covering numbers. Classification after each projection results in dichotomies that lie in a chain on a hypercube - which means that there exists an ordering of dichotomies, such that the Hamming distance between consecutive dichotomies is equal to one. 

\begin{theorem}\label{our_chaining_result}
Let us consider a family $\mathcal{F}_{n,k}$ of classifiers that correspond to the method of thresholding after random projection. The expectation of the generalization gap corresponding to $\mathcal{F}_{n,k}$ is bounded by 
\begin{align} \label{chaining_bound}
    E\left(\sup_{f\in \mathcal{F}_{n,k}}|R_{\text{train}}^{\mathcal{D}}(f)-R_{\text{popul}}(f)|\right)  \leq \frac{24}{\sqrt{N}} \left(\sqrt{\ln(n)}+1.66\right).
\end{align}
\end{theorem}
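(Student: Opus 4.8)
My plan is to recognize the generalization gap as the supremum of an empirical process and to bound its expectation through a Dudley-type entropy integral, using the chain structure of the dichotomy set $\mathcal{A}_{\mathcal{F}_{n,k}}(\boldsymbol{x}^N)$ to keep the covering numbers small. First I would pass from the gap to a Rademacher process by the standard symmetrization inequality, so that it suffices to bound $\frac{2}{N}E\sup_{f}\left|\sum_{i=1}^N \sigma_i \mathbbm{1}(f(\boldsymbol{x}_i)\neq y_i)\right|$ with $\sigma_i$ i.i.d.\ Rademacher. For two classifiers $f,g$ the loss values $\mathbbm{1}(f(\boldsymbol{x}_i)\neq y_i)$ and $\mathbbm{1}(g(\boldsymbol{x}_i)\neq y_i)$ differ exactly on the coordinates where $f(\boldsymbol{x}_i)\neq g(\boldsymbol{x}_i)$, so by Hoeffding's lemma the increments of this process are sub-Gaussian with respect to the pseudometric $\tilde{\rho}(f,g)=\frac{1}{N}\sqrt{d_H(f,g)}$, where $d_H$ is the Hamming distance between the two dichotomies. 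The chaining bound of \cite{Dudley_chaining} then yields a bound of the form $\frac{C}{\sqrt{N}}\int_0^1 \epsilon^{-1/2}\sqrt{\ln \mathcal{N}(\rho,\epsilon)}\,d\epsilon$, where $\mathcal{N}(\rho,\epsilon)$ is the covering number of the set of dichotomies in the normalized Hamming metric $\rho=d_H/N$, and the constant $C$ (absorbing the symmetrization factor and the universal chaining constant) is what eventually produces the leading factor $24$.

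The key estimate is the covering number, and here I would exploit the chain structure announced before the theorem. Write $\mathcal{A}_{\mathcal{F}_{n,k}}(\boldsymbol{x}^N)=\bigcup_{i=1}^n C_i$, where $C_i$ is the set of dichotomies obtained from the $i$-th projection; by the chain property each $C_i$ can be ordered as $v_0,v_1,\dots$ with consecutive elements at Hamming distance $1$, and $|C_i|\leq 2N$ by the counting argument in the proof of Theorem \ref{ntarpgeneralizationterm}. Since $d_H(v_j,v_{j'})\leq |j-j'|$ along a chain, every element within $\epsilon N$ index-steps of a given $v_j$ lies within normalized Hamming distance $\epsilon$; choosing centers spaced $2\epsilon N$ apart therefore covers each chain with at most $\tfrac{1}{\epsilon}+1$ balls, so $\mathcal{N}(\rho,\epsilon)\leq n\left(\tfrac{1}{\epsilon}+1\right)$. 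This is the step where the simple geometry pays off: a generic class of the same cardinality $2nN$ would only yield $\ln\mathcal{N}\sim\ln(nN)$, whereas the chain structure removes the dependence on $N$ inside the logarithm.

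Finally I would substitute this covering bound into the entropy integral. Using $\ln\mathcal{N}(\rho,\epsilon)\leq \ln n+\ln\!\left(\tfrac{1}{\epsilon}+1\right)$ and $\sqrt{a+b}\leq \sqrt{a}+\sqrt{b}$, the integral splits into $\sqrt{\ln n}\int_0^1 \epsilon^{-1/2}\,d\epsilon = 2\sqrt{\ln n}$, which after multiplication by the constant from the chaining step produces the $\frac{24}{\sqrt{N}}\sqrt{\ln n}$ term, plus the $n$-independent integral $\int_0^1 \epsilon^{-1/2}\sqrt{\ln(1/\epsilon+1)}\,d\epsilon$. The latter is a finite numerical constant that I would evaluate via the substitution $\epsilon=e^{-t}$, reducing it to a Gamma-function expression, and bound so that the resulting additive term is at most $\frac{24}{\sqrt{N}}\cdot 1.66$, giving the stated inequality.

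The \emph{main obstacle} is the first paragraph: getting the chaining machinery to output precisely these constants. One must verify the sub-Gaussian increment condition in the correct pseudometric and at the correct scale $1/\sqrt{N}$ (so that the variance proxy $d_H/N^2$ appears), and then invoke a version of Dudley's inequality whose universal constant, together with the factor $2$ from symmetrization, collapses to the effective coefficient that makes the $\sqrt{\ln n}$ term come out with coefficient exactly $24$. The covering computation and the numerical integral are then routine; the delicate part is matching normalizations and constants across the symmetrization step, the sub-Gaussian estimate, and the entropy integral.
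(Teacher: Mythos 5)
Your proposal is correct and follows essentially the same route as the paper: exploit the chain structure of each projection's dichotomies to get the covering bound $\mathcal{N}\leq n\left(\frac{1}{\epsilon}+1\right)$ (the paper's $\left\lceil\frac{2N}{2i+1}\right\rceil$ per chain, times $n$), then split the entropy integral into a $\sqrt{\ln n}$ term and an $n$-independent numerical integral bounded by $1.66$. The one obstacle you flag --- deriving the constant $24$ from symmetrization plus Dudley chaining --- is sidestepped in the paper by directly citing Theorem 1.16 of \cite{Lugosi2002}, which packages exactly this bound, $E\left(\sup_f|R_{\text{train}}^{\mathcal{D}}(f)-R_{\text{popul}}(f)|\right)\leq\frac{24}{\sqrt{N}}\int_0^1\sqrt{\ln\left(2\mathcal{N}(r,\mathcal{A}(x^N))\right)}\,dr$, in the square-root-of-normalized-Hamming metric (your $\epsilon^{-1/2}$ entropy integral is the same object after the change of variables $\epsilon=r^2$).
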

\begin{proof}
Theorem 1.16 from \cite{Lugosi2002} says that 
\begin{align*}
    E\left(\sup_{f\in \mathcal{F}}|R_{\text{train}}^{\mathcal{D}}(f)-R_{\text{popul}}(f)|\right)  \leq \frac{24}{\sqrt{N}}\max_{x_1,...x_N \in \mathbb{R}^d}\int_0^1\sqrt{\ln\left(2\mathcal{N}(r,\mathcal{A}(x^N))\right)}dr, 
\end{align*}
where $\mathcal{N}(r,\mathcal{A}(x^n))$ is a covering number of the set of dichotomies on $N$ datapoints $\mathcal{A}(\boldsymbol{x}^N)$ with radius $r$. The covering number is computed with respect to the square root of the normalized Hamming distance:
\begin{align*}
    d(\boldsymbol{x},\boldsymbol{y}) = \sqrt{\frac{1}{N}\sum_{i=1}^{N}\mathbbm{1}(x_i \neq y_i)}.
\end{align*}
We would like to upper bound covering numbers $\mathcal{N}(r, \mathcal{A}_{\mathcal{F}_{n,k}}(\boldsymbol{x}^N))$ in a particular case of the method of thresholding after random projection. If we use only one projection (if $n=1$) then the set of dichotomies on $N$ datapoints has no more than $2N$ elements:
\begin{align*}
    \max_{x_1, ..., x_N \in \mathbb{R}^d} |\mathcal{A}_{\mathcal{F}_{n,k}}(x^N)| \leq 2N.
\end{align*}
The dichotomies in $\mathcal{A}(\boldsymbol{x}^N)$ create a chain: there exists an ordering of its elements, such that the Hamming distance between neighbors is equal to one. That means that $d(\boldsymbol{x}, \boldsymbol{y}) = \sqrt{{1/N}}$ for the neighboring dichotomies $\boldsymbol{x}$ and $\boldsymbol{y}$. Therefore, if radius $r$ belongs to the interval $[0,\sqrt{1/N})$, we can cover the chain of dichotomies using all of its elements:
\begin{align*}
    \mathcal{N}(r,\mathcal{A}_{\mathcal{F}_{n,k}}(\boldsymbol{x}^N)) = |\mathcal{A}_{\mathcal{F}_{n,k}}(\boldsymbol{x}^N)| \leq 2N,
\end{align*}
and if $r$ is slightly larger and lies in the interval $[\sqrt{1/N}, \sqrt{2/N})$ then taking one point out of three is enough to cover the whole chain:
\begin{align*}
     \mathcal{N}(r,\mathcal{A}_{\mathcal{F}_{n,k}}(x^N)) = \left\lceil\frac{|\mathcal{A}_{\mathcal{F}_{n,k}}(x^N)|}{3}\right\rceil \leq \left\lceil\frac{2N}{3}\right\rceil.
\end{align*}
In general, if $r \in [\sqrt{(i-1)/N}, \sqrt{i/N})$ for $1\leq i \leq N$, then 
\begin{align*}
     \mathcal{N}(r,\mathcal{A}_{\mathcal{F}_{n,k}}(x^N)) = \left\lceil\frac{|\mathcal{A}_{\mathcal{F}_{n,k}}(x^N)|}{2i+1}\right\rceil \leq \left\lceil\frac{2N}{2i+1}\right\rceil.
\end{align*}
Therefore, we can bound the integral from Theorem 1.16 in the following way:
\begin{align*}
  \max_{\boldsymbol{x}_1,..., \boldsymbol{x}_N \in \mathbb{R}^d}\int_0^1\sqrt{\ln\left(2\mathcal{N}(r,\mathcal{A}_{\mathcal{F}_{n,k}}(x^N))\right)}dr\leq \sum_{i=1}^{N}\left(\sqrt{\frac{i}{N}}-\sqrt{\frac{i-1}{N}}\right)\sqrt{\ln\left(2\left\lceil\frac{2N}{2i+1}\right\rceil\right)}.
\end{align*}
The sum on the right can be viewed as a lower bound for the Riemann integral that uses $\Big\{\left[\sqrt{\frac{i-1}{N}}, \sqrt{\frac{i}{N}}\right]\Big\}_{i=1}^{N}$ as a partition of interval $[0,1]$. We know that for each $N$ the lower Riemann sum is smaller than the Riemann integral. Hence we get:
\begin{align}\label{integral_is_constant}
    \sum_{i=1}^{N-1}\left(\sqrt{\frac{i}{N}}-\sqrt{\frac{i-1}{N}}\right)\sqrt{\ln\left(2\left\lceil\frac{2N}{2i+1}\right\rceil\right)} \leq \int_0^1 \sqrt{\ln{\left(\frac{2}{r^2}+2\right)}} dr \leq 1.66.
\end{align}
where the inequality is based on a choice of partitioning $    r = \sqrt{\frac{i}{N}} \Rightarrow r^2 = \frac{i}{N}
$ which leads to 
\begin{align*}
    \left\lceil\frac{2N}{2i+1} \right\rceil \leq \left\lceil\frac{N}{i} \right\rceil \leq \frac{N}{i} + 1 = \frac{1}{r^2}+1.
\end{align*}
Using the bound from formula (\ref{integral_is_constant}) and Theorem 1.16 we get that
\begin{align*}
     E\left(\sup_{f \in \mathcal{F}_1}|R_{\text{train}}^{\mathcal{D}}(f)-R_{\text{popul}}(f)|\right) \leq \frac{24}{\sqrt{N}} \int_0^1 \sqrt{\ln{\left(\frac{2}{r^2}+2\right)}} dr \leq \frac{39.84}{\sqrt{N}}.
\end{align*}

We would like to expand this result to $n$ projections. After each projection a chain of dichotomies is created. Different chains might be close to each other on the hypercube or they might be far apart. We can cover each chain separately and obtain an upper bound on the covering number. For each $r>0$ the dichotomies that are the result of $n$ random projections can be bound by $n$ times the size of the covering set for dichotomies after just one projection:
\begin{align*}
    \mathcal{N}(r,\mathcal{A}_{\mathcal{F}_{n,k}}(\boldsymbol{x}^N))\leq n\mathcal{N}(r, \mathcal{A}_{\mathcal{F}_1}(\boldsymbol{x}^N)).
\end{align*}
as a result, we get the following inequality:
\begin{align*}
    E\left(\sup_{f \in \mathcal{F}_{n,k}}|R_{\text{train}}^{\mathcal{D}}(f)-R_{\text{popul}}(f)|\right) \leq \frac{24}{\sqrt{N}} \sum_{i=1}^{N-1}\left(\sqrt{\frac{i}{N}}-\sqrt{\frac{i-1}{N}}\right)\sqrt{\ln\left(2n\left\lceil\frac{2N}{2i+1}\right\rceil\right)}
\end{align*}
Following similar arguments as in the case of one projection, bounding the Riemann sum by an integral we get
\begin{align*}
      E\left(\sup_{f \in \mathcal{F}_{n,k}}|R_{\text{train}}^{\mathcal{D}}(f)-R_{\text{popul}}(f)|\right) \leq \frac{24}{\sqrt{N}} \int_0^1 \sqrt{\ln{\left(n\right)+\ln\left(\frac{2}{r^2}+2\right)}} dr,
\end{align*}
which is less or equal than $\frac{24}{\sqrt{N}}\left(\sqrt{\ln(n)}+1.66\right)$.
\end{proof}

\subsection{Asymptotic comparison to a classifier with a given VC dimension when number of samples converges to infinity}
We now compare the generalization gap of the method of thresholding after random projection with that of a classifier with a given VC dimension. Specifically, we consider the limit of the ratio of their generalization gaps when the number of data points in the training set converges to infinity. The result is in the following theorem.  

\begin{corollary}\label{ratio_improved}
For large enough training sets, the bound (\ref{chaining_bound}) on the generalization gap of the thresholding after random projection classification method is smaller than the bound (\ref{classical_chaining_bound}) on the generalization gap estimated using the VC dimension for any algorithm with VC dimension larger than $\left(\frac{24}{65.16}\left(1.66+\sqrt{\ln(n)}\right)\right)^2$. More specifically, when the number of samples goes to infinity, the ratio of the generalization gaps goes to $\frac{24}{65.16}\left(\sqrt{\frac{\ln(n)}{d_{VC}}}+\frac{1.66}{\sqrt{d_{VC}}}\right)$.
\end{corollary}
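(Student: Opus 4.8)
The plan is to compare the two explicit upper bounds directly by forming their ratio, exploiting the fact that both (\ref{chaining_bound}) and (\ref{classical_chaining_bound}) already carry the same $1/\sqrt{N}$ dependence. Concretely, the chaining bound for the random projection method is $\frac{24}{\sqrt{N}}\left(\sqrt{\ln(n)}+1.66\right)$, while the classical chaining bound for a class of VC dimension $d_{VC}$ is $65.16\sqrt{d_{VC}/N}$. First I would write the ratio
\begin{align*}
\frac{\dfrac{24}{\sqrt{N}}\left(\sqrt{\ln(n)}+1.66\right)}{65.16\sqrt{\dfrac{d_{VC}}{N}}}
\end{align*}
and observe that the factor $1/\sqrt{N}$ cancels between numerator and denominator, so that the ratio is in fact independent of $N$.

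The second step is a one-line simplification: after cancellation the ratio equals
\begin{align*}
\frac{24\left(\sqrt{\ln(n)}+1.66\right)}{65.16\sqrt{d_{VC}}} = \frac{24}{65.16}\left(\sqrt{\frac{\ln(n)}{d_{VC}}}+\frac{1.66}{\sqrt{d_{VC}}}\right),
\end{align*}
which is exactly the claimed limiting value. Because this expression does not involve $N$, the ratio of the two bounds equals this same constant for \emph{every} $N$, so its limit as $N\to\infty$ is immediate. The asymptotic phrasing simply reflects that these chaining bounds are the ones relevant for large $N$, once the $\ln(N)$ factor present in the earlier bound of Corollary \ref{bound_on_expectation} has been removed by the chaining argument of Theorem \ref{our_chaining_result}.

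For the first assertion I would then require this ratio to be strictly less than $1$, so that the random projection bound is the smaller of the two. Solving the inequality $24\left(\sqrt{\ln(n)}+1.66\right) < 65.16\sqrt{d_{VC}}$ for $d_{VC}$ yields exactly the threshold $d_{VC} > \left(\frac{24}{65.16}\left(1.66+\sqrt{\ln(n)}\right)\right)^2$ stated in the corollary, which completes that direction.

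I do not expect a genuine technical obstacle here, since every step is elementary algebra applied to two bounds already established in the excerpt. The only point requiring care is interpretive rather than computational: one must state explicitly that the comparison is between two valid \emph{upper bounds} on the same quantity $E\left(\sup_{f}|R_{\text{train}}^{\mathcal{D}}(f)-R_{\text{popul}}(f)|\right)$, and that the phrase ``ratio of the generalization gaps'' is to be read as the ratio of these bounds. Making this precise — together with the observation that the $N$-dependence is identical, so the ``limit'' is actually attained for all finite $N$ — is the main thing to get right.
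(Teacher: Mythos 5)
Your proposal is correct and follows essentially the same route as the paper, whose proof is precisely the one-line computation of the limit of the ratio $\frac{24}{\sqrt{N}}\left(\sqrt{\ln(n)}+1.66\right)\big/\left(65.16\sqrt{d_{VC}/N}\right)$, with the $1/\sqrt{N}$ factors cancelling. Your additional observations — that the ratio is in fact constant in $N$, and that the threshold on $d_{VC}$ follows by requiring the ratio to be less than $1$ — are accurate refinements of the same argument rather than a different approach.
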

\begin{proof}
We have:
\begin{align*}
    \lim_{N \rightarrow \infty} \dfrac{ \frac{24}{\sqrt{N}}\left(\sqrt{\ln(n)}+1.66\right)}{ 65.16\sqrt{\frac{d_{VC}}{N}}} = \frac{24}{65.16}\left(\sqrt{\frac{{\ln(n)}}{{d_{VC}}}}+\frac{1.66}{\sqrt{d_{VC}}}\right).
\end{align*}
\end{proof}
This means that as long as the number of projections $n$ satisfies the following inequality:
\begin{align*}
    n < \exp\left(d_{VC}\left(\frac{65.16}{24}-\frac{1.66}{\sqrt{d_{VC}}}\right)^2\right),
\end{align*}
the bound on the generalization gap of the method of thresholding after $n$ random projections is tighter than the one that corresponds to an algorithm with VC dimension equal to $d_{VC}$. For example, if $d_{VC}$ equals $3$, we can have up to $10476$ random projection directions to achieve a similar bound, while for $d_{VC}=4$, we can have $n$ as large as $1487935$. 

\section*{Experiments}\label{experiments}
\subsubsection*{Synthetic data, model definition}
Let us consider a high-dimensional distribution that is motivated by an example from \cite{boutin2024}. It is a mixture of two marginal distributions that represent two classes: 
$(X,Y) \sim P$, where $X \in \mathbb{R}^d$, with $d$ potentially very large, $Y \in \{-1,1\}$. 
 For $k\in \{-1,1\}$, the distribution of $X$ given $Y=k$ is the following:
\begin{align*}
    X|Y=k \sim (X_1^k, ..., X_d^k),
\end{align*}
where 
\begin{align}\label{model}
    \forall i \neq j: \;\; X_i^k \indep X_j^k \;\; \text{ and } X_i^k \sim \text{Ber}(p_i^k) + \mathcal{N}(0,\sigma^2)
\end{align}
with $\bs{p}^k = (p_1^k,...,p_d^k)$ a vector of Bernoulli parameters.
In other words, the components of the data points are independent Bernoulli random variables, perturbed by Gaussian noise. Thus, each class can be seen as being drawn from a mixture of Gaussians whose means are situated on the vertices of a $d$-dimensional cube. The vectors $\bs{p}^{-1}$ and $\bs{p}^{1}$ determine the likelihood of ``belonging" to each vertex for each class, respectively. For example, if $p_1^{-1}=0$ and $p_1^{1}=1$, then the first component of the data points is drawn from a Gaussian with zero mean for the first class, while that of the second class is drawn from a Gaussian with mean equal to one.  
In the extreme case when the parameters are equal: $\bs{p}^1=\bs{p}^{-1}$, the classes are mixed up to the highest possible extend, and so Bayes error is equal to $50\%$ and no classifier can have a population error better than $50\%$. It has been shown empirically that classification methods tend to overfit on similar datasets. For example in \cite{recht_rethinking} it was shown that large enough neural networks are able to memorize a dataset with random labeling so that the training error was close to $0\%$, the test error was close to $50\%$, which lead to an overwhelming generalization gap.

When the parameters $\bs{p}^{1}$ and $\bs{p}^{-1}$ get further apart, the data become less mixed and are easier to separate. Another extreme case is when $\bs{p}^{1}$ is a vector of ones and $\bs{p}^{-1}$ is a vector of zeros. In this case, each class is drawn from a single Gaussian and Bayes classifier is a linear classifier.

\subsubsection*{Setup of the experiment}
We set the noise $\sigma$ to two different levels: $\sigma=0$ and $\sigma=0.05$. We also consider $20$ different combinations of parameters $\bs{p}^{1}$ and $\bs{p}^{-1}$. We start with an extreme case, when $\bs{p}^{1}=\bs{p}^{-1}=(0.25, ...0.25, 0.8, ..., 0.8,1)$ and then change $\bs{p}^{-1}$ linearly to $(0.8, ...0.8, 0.25, ..., 0.25, 1)$ to see how the gap between training and test errors changes for different levels of mixture of classes. We fix the dimension to $d=65$, the number of training points to $200$, with half belonging to the first class and the other half belonging to the second class (priors are equal). We generate an independent test dataset that follows the same distribution and consists of $2000$ points with equal priors. With this size of the test set, Hoeffding inequality tells us that with probability $0.9$, the difference between the empirical test error and the true-population error will be less than $2.8\%$ regardless of the data distribution. Indeed we have
\begin{align*}
    P\left(|\text{test error}(f)- \text{population error}(f)|\leq \sqrt{\frac{\ln(2/\delta)}{2N}}\right)\geq \delta,
\end{align*}
where $f$ is the classifier being tested.\\
We train four models on the training set and compute their empirical generalization gap - the difference between test and training errors. We repeat the process of generating training and testing data $5$ times, retrain the four models each time and compute an average for the generalization gap as well as its standard deviation.  Finally, we plot the average and standard deviation of the generalization gap; the x-axis is labeled with the steps we have steps that represent the combination of parameters $\bs{p}^{1}$ and $\bs{p}^{-1}$ (see Figure \ref{gap}). 
\begin{figure}
\begin{minipage}{.5\linewidth}
\centering
{\label{zero_sigma}\includegraphics[scale=.55]{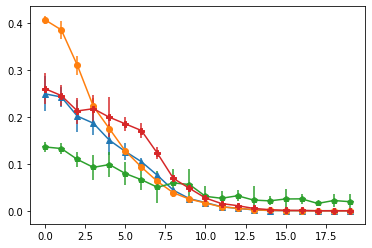}}
\end{minipage}%
\begin{minipage}{.5\linewidth}
\centering
{\label{fivesigma}\includegraphics[scale=.55]{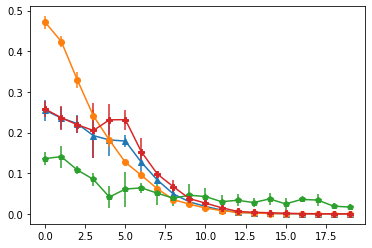}}
\end{minipage}\par\medskip
\caption{Empirical generalization gap for different mixture levels of data. Parameters are: $d=65$, $\bs{p}^{1}=(0.25,...,0.8,1)$, at $x=20$ $\bs{p}^{-1}=(0.8,...,0.25,1)$, $N_{\text{train}}=200$, $N_{\text{test}}=2000$, green line with pentagons correspond to thresholding after random projection with $n=10000$, blue line with triangles corresponds to logistic regression, red line with crosses corresponds to linear SVM and orange line with circles corresponds to SVM with Gaussian kernel. The left and right figures differ in the level of noise: (a) $\sigma=0$ (b) $\sigma=0.05$. The theoretical bounds for the generalization gap are 2.7 for the linear models (logistic regression and SVM) and  0.8 for thresholding after random projection). Note that the distribution of the data represented on the horizontal axis differs only in the labels - the distribution of independent variables $\bs{x}$ stays the same. Therefore this experiment shows amongst other things that the generalization bounds should take labels into account even for simple models (similar issue for the case of deep models is discussed in \cite{recht_rethinking}).}
\label{gap}
\end{figure}

\subsubsection*{Discussion of the results}
A smaller generalization gap is a certificate of robustness of an algorithm; better guarantees for generalization mean that we can put more trust in the training error. There are certain setups in learning where a test dataset is not available, which implies that the only indicator of the performance is the training error, to be interpreted within the context of the theoretical bounds on the generalization gap. In our experimental setup, the bound on the generalization gap for thresholding after random projection is $80\%$, while that of the other linear classifiers, logistic regression and linear SVM it is $270\%$ and that of the Gaussian kernel SVM is infinite when an arbitrary number of support vectors is used. While the bounds are probabilistic, and not necessarily tight, our experiments show a similar trend and thus support our theoretical results. Indeed according to Figure \ref{gap} the empirical gap for thresholding after random projection is never larger than 15\%, while for logistic regression and linear SVM it can be $25\%$. SVM with Gaussian kernel can overfit even more: in some cases its generalization gap is larger than $40\%$. \\
In our experimental results, the smaller generalization gap of thresholding after random projection was observed for ``messy" data: the ones where the classes are mixed up together. In such cases, even simple linear algorithms such as logistic regression or linear SVM can overfit the training samples if the dimension is high and the training set is relatively small. An example of such a dataset might be medical data, where one might not have a large training sample, but the dimension of the data might be very large. In the following section we will see a real dataset of images, that has this property: the classes are mixed up together, the dimension is high and the simple linear algorithms (logistic regression and linear SVM) overfit the training data more than thresholding after random projection with $n=20,000$ projections.

\section{Approximation properties }\label{optimality}

\noindent In this section we show that the error of the classifier we consider converges to the optimal error when the number of projections $n$ and the order of polynomial transformation $k$ converge to infinity. This part corresponds to the first step of the consistency argument from \cite{Geman_bias_variance_dilemma}. We show that the set of classifiers $\mathcal{F}_{n,k}$ is rich enough to approximate the optimal Bayes decision function given mild conditions on the class conditional distributions. The second step of the consistency argument from \cite{Geman_bias_variance_dilemma} is discussed in this work in the previous section, where instead of the VC dimension, we are using the number of projection directions $n$ to uniformly bound the difference between training and testing errors by a term that converges to $0$ as $N$ goes to infinity. These two steps show that this method has a potential to fit the data and generalize well, which results in successful learning.

We approach the topic of optimality from a theoretical perspective, assuming full knowledge of the class conditional distributions that generate the data and optimal separation given by Bayes decision rule. We start with a case where the optimal decision function is linear. In this case using the thresholding after random projection classification method on the original data enough times (i.e., with $n$ large enough) abates the reducible error. If the optimal decision function is not linear, we show that for each $\varepsilon>0$ there exists a polynomial of some degree $k$ for which we can find $n$ such that the reducible error is smaller than $\varepsilon$ with probability as large as desired (Corollary \ref{corollary}).

Let us assume that the data to classify come from two different classes $\omega_1$ and $\omega_2$ that follow probability distributions with densities $\rho_1(\boldsymbol{x}) = \rho(\boldsymbol{x}|\omega_1)$ and $\rho_2(\boldsymbol{x}) = \rho(\boldsymbol{x}|\omega_2)$ respectively. Each class has a certain probability of occurring $P(\omega_1)$ and $P(\omega_2)$, called prior. The mixture of the densities can be expressed in the following way:
\begin{align*}
    \rho(\boldsymbol{x}) = \rho_1(\boldsymbol{x})P(\omega_1) + \rho_2(\boldsymbol{x})P(\omega_2).
\end{align*}
Let $\mathcal{B}$ be the support of the function $\rho(\boldsymbol{x})$.

Bayes Classification Rule chooses a class for $\boldsymbol{x}$ that maximizes $\rho_i(\boldsymbol{x})P(\omega_i)$. The probability of an overall error for the classifier is minimized by choosing a class for every point according to this Rule. In case it is well defined, this minimal error is called Bayes Error ($BE$) and can be expressed in the following way (see \cite{DudaHartStork01} for example):
\begin{align*}
  BE = \int_{\mathcal{B}}\min\{\rho_1(\boldsymbol{x})P(\omega_1), \rho_2(\boldsymbol{x})P(\omega_2)\}d\boldsymbol{x}.  
\end{align*}
The overall error $R_{\text{popul}}(f)$ that any classifier makes is therefore no smaller than Bayes Error and their difference is called the reducible error:
\begin{align*}
\text{Reducible error}(f) = R_{\text{popul}}(f)-BE.
\end{align*}

In most of the following proofs we need to assume that $\mathcal{B}$ is a compact subset of ${\mathbb R}^d$. For many distributions (e.g.~mixture of Gaussians) this assumption does not apply. However, for distributions that have a finite first moment it is possible to identify a compact region so that the decision that we make outside of that region has an arbitrarily small influence on the overall error. For a fixed $\varepsilon>0$ and distribution $\rho(\boldsymbol{x})$, let us define a compact set $\mathcal{B}(\varepsilon)$ with the following property: $\int_{\mathcal{B}(\varepsilon)^{C}}\rho(\boldsymbol{x})d\boldsymbol{x}<\varepsilon$. This set can be obtained in the following way:
\begin{align}\label{cropped_compact}
    \mathcal{B}(\varepsilon):=\{\boldsymbol{x}\in \mathbb{R}^d: \|\boldsymbol{x}-\int_{\mathbb{R}^d}\boldsymbol{x}\rho(\boldsymbol{x})d\boldsymbol{x}\|_2\leq t(\varepsilon)\},
\end{align}
where $t(\varepsilon) = \arginf_{s\in \mathbb{R}}\int_{\|\boldsymbol{x}-E\boldsymbol{x}\|_2>s}\rho(\boldsymbol{x})d\boldsymbol{x}<\varepsilon.$ We know that for every $\varepsilon>0$, $t(\varepsilon)>-\infty$, because $\int_{\boldsymbol{x}\in\mathbb{R}^{d}}\rho(\boldsymbol{x})=1$ and therefore $\lim_{t \rightarrow \infty}\int_{\|\bs{x}-E\bs{x}\|_2>t}\rho(\boldsymbol{x})d\boldsymbol{x}=0.$ If we additionally assume that distribution of $\boldsymbol{X}$ has a finite variance that is a multiple of an identity matrix ($\text{Cov}(\boldsymbol{X})=\sigma^2\mathbb{I}$), we can use Chebyshev inequality to find $t(\varepsilon)$. Since $   P\left(\|\boldsymbol{X} - E\boldsymbol{X}\|_2\geq k\sigma\right) \leq \frac{d}{k^2}$, then by choosing $t(\varepsilon)=\sigma\sqrt{\frac{d}{\varepsilon}}$ we get that 
\begin{align*}
    P\left(\|\boldsymbol{X} - E\boldsymbol{X}\|_2>t(\varepsilon)\right)\leq \varepsilon.
\end{align*}
\color{black}
 For the method of thresholding after random projection, we generate directions for projections at random following Uniform distribution on a unit hypersphere. This distribution is such that every open set on the hypersphere has a non-zero probability: if $u$ is an open set in $S^{d-1}$ and $f_a$ denotes the distribution of the random vectors for projection, then: $p_u = \int_{u}f_a(\boldsymbol{a})d\boldsymbol{a}>0$.

\begin{theorem} \label{linear_case}
If the optimal decision function given by Bayes rule is linear, then for $k=1$ (i.e., using the original feature space coordinates without extension) the reducible error of the method of thresholding after random projection converges to $0$ in probability as $n$ goes to infinity.
\end{theorem}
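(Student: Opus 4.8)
The plan is to reduce the statement to two ingredients: a continuity property of the population error as a function of the projection direction, and the elementary fact that $n$ independent uniform directions are very likely to land near any fixed direction once $n$ is large. Since the Bayes rule is assumed linear, there is a unit normal $\tilde{\boldsymbol{a}} \in S^{d-1}$ and an offset $b \in \mathbb{R}$ such that the Bayes classifier equals $\sign(\tilde{\boldsymbol{a}}\cdot\boldsymbol{x} - b)$ (up to the choice of which side carries which label, which the method is free to make). For $k=1$ the hypothesis class $\mathcal{F}_{n,1}$ is exactly the set of maps $\boldsymbol{x}\mapsto\sign(\boldsymbol{a}_i\cdot\boldsymbol{x}+\tau)$ with $\boldsymbol{a}_i$ random and $\tau$ optimized, so the Bayes hyperplane is in principle reachable, and it suffices to show that the best of the $n$ random projections comes within $\varepsilon$ of $BE$ with probability tending to one.

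For a direction $\boldsymbol{a}$, I would write $R^{\star}(\boldsymbol{a}) = \min_{\tau\in\mathbb{R}} R_{\text{popul}}(\sign(\boldsymbol{a}\cdot\boldsymbol{x}+\tau))$ for the best population error attainable after projecting on $\boldsymbol{a}$, so that $R^{\star}(\tilde{\boldsymbol{a}}) = BE$. The heart of the argument is upper semicontinuity of $R^{\star}$ at $\tilde{\boldsymbol{a}}$, which I would obtain by holding the threshold fixed at the Bayes offset $b$: then $R^{\star}(\boldsymbol{a}) \le R_{\text{popul}}(\sign(\boldsymbol{a}\cdot\boldsymbol{x}-b))$, and as $\boldsymbol{a}\to\tilde{\boldsymbol{a}}$ the indicator of the half-space $\{\boldsymbol{a}\cdot\boldsymbol{x}>b\}$ converges pointwise to that of $\{\tilde{\boldsymbol{a}}\cdot\boldsymbol{x}>b\}$ everywhere except on the limiting hyperplane, a set of Lebesgue measure zero and hence of $\rho$-measure zero because the class-conditional densities exist. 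Dominated convergence (the integrand is bounded by the integrable $\rho$) then yields $R_{\text{popul}}(\sign(\boldsymbol{a}\cdot\boldsymbol{x}-b)) \to BE$. Thus for every $\varepsilon>0$ there is an open spherical cap $C\subset S^{d-1}$ containing $\tilde{\boldsymbol{a}}$ on which $R^{\star}(\boldsymbol{a}) < BE + \varepsilon$.

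The remaining step is probabilistic. Because the sampling distribution $f_a$ assigns positive mass to every open set on the sphere, $p := \int_C f_a(\boldsymbol{a})\,d\boldsymbol{a} > 0$. The $n$ directions are drawn independently, so the probability that none of them falls in $C$ equals $(1-p)^n$, which tends to $0$ as $n\to\infty$. On the complementary event some $\boldsymbol{a}_i\in C$, and since the method selects the projection of least error, the chosen classifier satisfies $R_{\text{popul}} \le \min_j R^{\star}(\boldsymbol{a}_j) \le R^{\star}(\boldsymbol{a}_i) < BE + \varepsilon$. Therefore $P(R_{\text{popul}} - BE \ge \varepsilon) \le (1-p)^n \to 0$, which is convergence of the reducible error to $0$ in probability.

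I expect the continuity of the population error in the projection direction to be the delicate part. It rests squarely on absolute continuity of the data distribution, so that perturbing the separating hyperplane transfers no mass across a positive-measure set; this is exactly where the existence of the densities $\rho_1,\rho_2$ enters. Keeping the threshold pinned at the Bayes value $b$ rather than re-optimizing it for each $\boldsymbol{a}$ sidesteps a second continuity issue, since optimizing $\tau$ can only lower $R^{\star}(\boldsymbol{a})$ and so preserves the one-sided bound we need.
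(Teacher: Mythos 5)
Your proof is correct, but it takes a genuinely different route from the paper's. The shared ingredient is the tail step: both arguments observe that an open cap $u$ around the optimal direction has positive mass $p_u$ under the sampling distribution, so all $n$ i.i.d.\ draws miss it with probability $(1-p_u)^n \to 0$. Where you diverge is in how nearness of a direction to $\tilde{\boldsymbol{a}}$ is converted into nearness of the error to $BE$. The paper argues geometrically and quantitatively: the disagreement region $\mathcal{A}_n$ between the chosen classifier and the Bayes classifier is a wedge between two hyperplanes with dihedral angle $\alpha_n$; its intersection with the compact support $\mathcal{B}$ has volume at most that of two hyperspherical sectors of radius $\diam(\mathcal{B})$, computed explicitly via the regularized incomplete beta function $I_{\sin^2(\alpha_n)}$; and the reducible error is at most $M$ times this volume, where $M$ bounds $\max_i\{\rho_i(\boldsymbol{x}) P(\omega_i)\}$ on $\mathcal{B}$. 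You instead argue softly: pinning the threshold at the Bayes offset $b$ gives the one-sided bound $R^{\star}(\boldsymbol{a}) \leq R_{\text{popul}}\left(\sign(\boldsymbol{a}\cdot\boldsymbol{x}-b)\right)$, and upper semicontinuity at $\tilde{\boldsymbol{a}}$ follows from dominated convergence, the exceptional set being the limiting hyperplane, which is $\rho$-null by absolute continuity. Your route buys robustness: it needs neither the compactness of $\mathcal{B}$ nor the finiteness of $M$, and, as you note, fixing $\tau=-b$ sidesteps the weakest step of the paper's proof, namely the bare assertion that ``thresholding is a continuous function of the projection direction'' so that the optimized thresholds converge along with the directions --- a claim the paper does not justify and which your one-sided comparison renders unnecessary. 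What the paper's route buys in exchange is an explicit rate: the sector-volume formula is precisely what feeds the quantitative estimate in Lemma \ref{howbign} of how large $n$ must be for a given tolerance $\delta$, whereas your dominated-convergence argument yields convergence with no rate. One small point to tidy: $\min_{\tau}$ should be an infimum, since attainment is not automatic; this is harmless here because you only use the comparison with the fixed threshold $-b$, and in the idealized setting of this section the method's threshold is by definition population-error optimal, hence at least as good.
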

\begin{proof}
Let us denote the unit normal vector to the optimal separation hyperplane given by Bayes' rule as $\boldsymbol{a}$. If the random vector drawn $\widehat{\boldsymbol{a}}$ is equal to $\boldsymbol{a}$ then the method of thresholding after random projection will be optimal and the error will be equal to Bayes error. Let $u$ be an open neighborhood of $\boldsymbol{a}$ on the unit hypersphere and let $p_{u}$ be the probability that $\widehat{\boldsymbol{a}} \in u$. By assumption $p_{u} \neq 0$. Consider $n$ independent random samples of the vector $\{\boldsymbol{a}^{i}\}_{i=1}^{n}$. The probability that all of these vectors lie outside of $u$ is equal to $(1-p_u)^n$, and thus converges to $0$ as $n$ goes to infinity. Therefore, with $n$ large enough, we can get as close to the optimal separation hyperplane as needed by choosing the vector ${{\boldsymbol{a}}^{(n)}}$ that minimizes the distance to the optimal $\boldsymbol{a}$. The vector chosen for classification converges to the optimal one in probability:
\begin{equation}\label{void}
    \forall \varepsilon>0: \; \lim_{n \rightarrow \infty}P\left(\vert\vert {\boldsymbol{a}}^{(n)} - \boldsymbol{a} \vert\vert_2 > \varepsilon\right) = \lim_{n \rightarrow \infty}(1-p_{u_{\varepsilon}})^n  = 0,
\end{equation}
where 
\begin{align*}
    p_{u_{\varepsilon}} = \int_{\|\boldsymbol{a}-\hat{\boldsymbol{a}}\|_2>\varepsilon} f_a(\boldsymbol{a})d\boldsymbol{a}.
\end{align*}
\noindent Let us fix $\varepsilon>0$ and find $n$ such that the reducible error of the method of thresholding after random projection is smaller than $\varepsilon$. Let us define a compact set $\mathcal{B}_{\varepsilon/2}$ according to \ref{cropped_compact}. The error that we make outside of $\mathcal{B}_{\varepsilon/2}$ will be smaller that $\varepsilon/2$ no matter what our classification is.  The reducible error on $\mathcal{B}_{\varepsilon/2}$ can be expressed as the following integral:
\begin{equation*}
    \frac{1}{2}\int_{\mathcal{A}_n \cap \mathcal{B}_{\varepsilon/2}} \max\{\rho_1(\boldsymbol{x})P(\omega_1),\rho_2(\boldsymbol{x})P(\omega_2)\}-\min\{\rho_1(\boldsymbol{x})P(\omega_1),\rho_2(\boldsymbol{x})P(\omega_2)\}d\boldsymbol{x},
\end{equation*}
where $\mathcal{A}_n$ is the set of points where the classifier does not make an optimal decision given by Bayes rule. The region $\mathcal{A}_n$ is an intersection of two half-spaces (given by the separation hyperplanes). The area of $\mathcal{A}_n$ can be characterized by the dihedral angle between the two hyperplanes, let us denote it by $\alpha_n$. 
\begin{align}\label{reducible_error}
         \frac{1}{2}\int_{\mathcal{A}_n \cap \mathcal{B}_{\varepsilon/2}} \max_i\{\rho_i(\boldsymbol{x})P(\omega_i)\}-\min_i\{\rho_i(\boldsymbol{x})P(\omega_i)\}d\boldsymbol{x} \leq M \int_{\mathcal{A}_n \cap \mathcal{B}_{\varepsilon/2}}d\boldsymbol{x},
\end{align}
    where $M$ is a finite upper bound for continuous function $\max\{\rho_1P(\omega_1),\rho_2P(\omega_2)\}$ on a compact set $\mathcal{B}_{\varepsilon/2}$. The integral $\int_{\mathcal{A}_n \cap \mathcal{B}_{\varepsilon/2}}d\boldsymbol{x}$ is the volume of an intersection of the set $\mathcal{A}_n$ with the set $\mathcal{B}_{\varepsilon/2}$. We can provide an upper bound for this volume by considering a volume of two hyperspherical sectors with colatitude angle equal to $\alpha_n$ in a hypersphere with radius $B = \diam(\mathcal{B}_{\varepsilon/2})$. Due to \cite{area_hyper_cap_Li} the hypersector has the following volume:
    \begin{align*}
        V_d^{\text{sector}}(B) = \dfrac{\pi^{d/2}}{2\Gamma\left(\frac{d}{2}+1\right)}B^d I_{\sin^2(\alpha_n)}\left(\frac{d-1}{2}, \frac{1}{2}\right),
    \end{align*}
    where $I$ is a regularized incomplete beta function:
    \begin{align*}
        I_{\sin^2(\alpha_n)}\left(\frac{d-1}{2}, \frac{1}{2}\right) = \dfrac{1}{B\left(\frac{d-1}{2}, \frac{1}{2}\right)} \int_0^{\sin^2(\alpha_n)}u^{\frac{d-3}{2}}(1-u)^{-\frac{1}{2}}du,
    \end{align*}
    which converges to $0$ as $\alpha_n$ converges to $0$. The volume of $\mathcal{A}_n\cap\mathcal{B}_{\varepsilon/2}$ also converges to $0$, since:
    \begin{align}\label{volumebysector}
    \int_{\mathcal{A}_n\cap\mathcal{B}_{\varepsilon/2}}d\boldsymbol{x} \leq 2V_d^{\text{sector}}(B).
    \end{align}
\noindent Since ${\boldsymbol{a}}^{(n)}$ approaches $\boldsymbol{a}$ in probability, the dihedral angle $\alpha_n$ approaches $0$ in probability. Thresholding is a continuous function of the projection direction. Therefore if  $\lim_{n\rightarrow\infty}\widehat{\boldsymbol{a}}_n = \boldsymbol{a}$ in probability, the thresholds that correspond to each projection converge to the threshold that corresponds to the optimal projection. Therefore, the separating hyperplane chosen by the method is not parallel to the optimal separating hyperplane. Due to (\ref{volumebysector})
    \begin{align*}
        \forall \varepsilon>0 \;\; \lim_{n \rightarrow \infty}P\left(M\int_{\mathcal{A}_n\cup\mathcal{B}_{\varepsilon/2}}d\boldsymbol{x} > \varepsilon/2\right)=0
    \end{align*}
and using (\ref{reducible_error}) we have that
\begin{align*}
    P\left(\text{reducible error}>\varepsilon\right) \leq P\left(\int_{\mathcal{B}_{\varepsilon/2}^C}\rho(\boldsymbol{x})d\boldsymbol{x}\right)+P\left(M\int_{\mathcal{A}_n\cup\mathcal{B}_{\varepsilon/2}}d\boldsymbol{x}>\varepsilon/2\right),
\end{align*}
where first probability is equal to $0$ due to construction of the compact set $\mathcal{B}_{\varepsilon/2}$. Therefore the reducible error of the method converges to $0$ in probability. 
\end{proof}

If the optimal function for classification according to Bayes rule is not linear, then in order to achieve an arbitrarily small reducible error, we extend the feature space using monomials of order up to some $k>1$. The result of the extension is that we decide the class based on the sign of a polynomial of degree $k$ with random coefficients. In the following section we study the approximation properties of such polynomials. One of the results (Theorem \ref{polynomial_case}) was inspired by the Universal Approximation Theorem from \cite{universal_theorem_cybenko}. Theorem \ref{polynomial_case} shows that with large probability the sign of a Boolean classification function can be approximated on an arbitrary large subset of the support by a polynomial with random coefficients.

\begin{theorem}\label{continuous_by_random_polynom}
Given $\varepsilon>0$ and a continuous function $g$ on a compact set $\mathcal{B}$ there exists $k$, such that the probability that amongst $n$ randomly generated polynomials of degree $k$ there exists one with supremum distance from $g$ smaller than $\varepsilon$ converges to $1$ as $n$ converges to infinity:
\begin{align*}
    \forall_{\varepsilon>0} \;\; \forall_{g \in \mathcal{C}(\mathcal{B})} \;\; \exists_{k \in \mathbb{N}} \;\; P\left(\min_{i \in \{1,...,n\}}\|p^{(i)}_k-g\|_{\infty}\leq\varepsilon \right) \xrightarrow{n\rightarrow \infty} 1
\end{align*}
\end{theorem}
\begin{proof}
We can approximate continuous function $g$ by a polynomial $p$. Indeed, according to Stone-Weierstrass theorem since $g$ is a continuous real-valued function defined on a closed and bounded set $\mathcal{B} \subset \mathbb{R}^d$ for each $\widehat{\varepsilon}>0$ there exists a polynomial $p(\boldsymbol{x})$, such that:
\begin{align*}
    \sup_{\boldsymbol{x} \in \mathcal{B}}|g(\boldsymbol{x})-p(\boldsymbol{x})|<\widehat{\varepsilon}.
\end{align*}
The next question is whether we can approximate such a polynomial with a polynomial whose coefficients are chosen at random with a sufficiently high number or random draws. The optimal polynomial $p(\boldsymbol{x})$ is of certain degree $k$. Let us consider a polynomial transform of degree $k$ of $\boldsymbol{x}$ using a mapping $\phi_k: \mathbb{R}^d \rightarrow \mathbb{R}^{\tilde{d}}$:
  \begin{align*}
  \phi_k(\boldsymbol{x}) &= (1, x_1, x_2,..., x_i^{a_i}x_j^{a_j}x_m^{a_m},..., x_d^k), \\
  \left(\phi_k(\boldsymbol{x})\right)_{J} &= \prod_{i\in J}x_i^{a_i}, \;\;\; \sum_{i\in J}a_i\leq k, \;\;\; J \subseteq \{1, ...,d\}.
  \end{align*}
  The dimension of the new space of features is $\tilde{d}$, which depends on the original dimension $d$ and the degree $k$ of the optimal polynomial $p(\boldsymbol{x})$. Consider randomly generated coefficients 
  \begin{align*}
    \boldsymbol{a}^{(i)} \sim \text{Unif}\left(S^{\tilde{d}-1}\right)  \text{, where }\tilde{d}=\binom{d+k}{k}.
  \end{align*}
 Let us construct a polynomial as a dot product between the generated coefficients and a polynomial transformation of order $k$:
  \begin{align}\label{recepy}
      p^{(i)}(\boldsymbol{x}) = \boldsymbol{a}^{(i)} \cdot \phi_k(\boldsymbol{x}).
  \end{align}
Our target polynomial is $p(\boldsymbol{x})$, can we get as close as possible to it by choosing one of $n$ randomly generated polynomials? In other words, is it true that with large probability
\begin{align*}
    \forall_{\varepsilon>0} \;\; \exists_{n} \text{ and } \exists_{ {p^{(\hat{n})}}(\boldsymbol{x}) \in \{p^{(1)}(\boldsymbol{x}),\; ..., \;p^{(n)}(\boldsymbol{x})\}}:\; \sup_{\boldsymbol{x} \in \mathcal{B}} |p(\boldsymbol{x})-p^{(\hat{n})}(\boldsymbol{x}))|\leq \varepsilon?
\end{align*}
One restriction on random polynomials following formula (\ref{recepy}) is that $\|\boldsymbol{a}^{(i)}\|=1$. The norm of the coefficients ${\boldsymbol{a}}$ in the optimal polynomial $p(\boldsymbol{x})$ does not have to be 1, but we can rescale the coefficients by $\|{\boldsymbol{a}}\|$ and get a rescaled polynomial $\widetilde{p}(\boldsymbol{x})$:
\begin{align*}
    \widetilde{p}(\boldsymbol{x}) =  \dfrac{p(\boldsymbol{x})}{\|{\boldsymbol{a}}\|} = \dfrac{{a}_0}{\|{\boldsymbol{a}}\|} +  \dfrac{{a_1}}{\|{\boldsymbol{a}}\|}x_1+...+\dfrac{{a_{\tilde{d}}}}{\|{\boldsymbol{a}}\|}x_d^k. 
\end{align*}
Let us denote the rescaled coefficients by $\widetilde{\boldsymbol{a}}$. With probability converging to $1$ (see the proof of Theorem \ref{linear_case}), for each $\varepsilon'>0$ we can choose coefficients $\boldsymbol{a}^{(\hat{n})}$ in an open ball around the optimal coefficients: $\left|\left|\boldsymbol{a}^{(\hat{n})}-\widetilde{\boldsymbol{a}}\right|\right|_{\infty} < \varepsilon'$. Then, for every $\boldsymbol{x} \in \mathcal{B}$:
\begin{align} \label{polyn_bound}
\begin{split}
    |p^{(\hat{n})}(\boldsymbol{x})-\widetilde{p}(\boldsymbol{x})| = \left|\left(a_0^{(\hat{n})} - \widetilde{a}_{0}\right) + \left(a_1^{(\hat{n})} - \widetilde{a}_{1}\right)x_1+...\left(a_{\tilde{d}}^{(\hat{n})} - \widetilde{a}_{\tilde{d}}\right)x_d^k\right| \leq \\ \leq \left|\left|\boldsymbol{a}^{(\hat{n})}-\widetilde{{\boldsymbol{a}}}\right|\right|_{\infty}\left(1+|x_1|+|x_2|+...+|x_d^k| \right)
    \leq \left|\left|\boldsymbol{a}^{(\hat{n})}-\widetilde{{\boldsymbol{a}}}\right|\right|_{\infty} \|\phi(\boldsymbol{x})\|_1 \leq \\ \leq \left|\left|\boldsymbol{a}^{(\hat{n})}-\widetilde{{\boldsymbol{a}}}\right|\right|_{\infty}  \tilde{d} \; \|\boldsymbol{x}\|_{\star}^{k} \leq \tilde{d}\; B_{\star}^k\; \varepsilon',  
    \end{split}
\end{align}
where $\|\boldsymbol{x}\|_{\star} = \|\boldsymbol{x}\|_1\lor1$ and $B_{\star} = \diam(\mathcal{B})\lor 1$ which implies that    $\|\boldsymbol{x}\|_{\star}\leq B_{\star}$. Therefore if we choose $\varepsilon'$ and $\widehat{\varepsilon}$ so that 
\begin{align*}
    \varepsilon' \leq \dfrac{\varepsilon}{2\tilde{d}B^k_{\star}\|\boldsymbol{a}\|} \; \text{  and  } \; \widehat{\varepsilon} \leq \dfrac{\varepsilon}{2},
\end{align*}
then the following inequality holds:
\begin{align*}
    \sup_{\boldsymbol{x}\in {\mathcal{B}}}\left|\|\boldsymbol{a}\|p^{(\hat{n})}(\boldsymbol{x})-g(\boldsymbol{x})\right| \leq \sup_{\boldsymbol{x}\in {\mathcal{B}}}\left(\left|\|\boldsymbol{a}\|p^{(\hat{n})}(\boldsymbol{x})-{p}(\boldsymbol{x})\right|\right)+\sup_{\boldsymbol{x}\in \mathcal{B}}\left(\left|p(\boldsymbol{x})-g(\boldsymbol{x})\right|\right) \leq \varepsilon.
\end{align*}
\end{proof}

\begin{theorem}\label{polynomial_case} 
Given $\varepsilon>0$ and $\eta>0$ and a Boolean function $c$ that splits the compact set $\mathcal{B}$ into two measurable sets $\mathcal{B}_1$ and $\mathcal{B}_2$, there exists $\widetilde{\mathcal{B}} \subset \mathcal{B}$ and $k$, such that $\mu(\mathcal{B}\setminus\widetilde{\mathcal{B}})\leq \eta$ and the probability that amongst $n$ randomly generated polynomials of degree $k$ there exists one with a supremum distance from $c$ smaller than $\varepsilon$ on $\widetilde{\mathcal{B}}$ converges to $1$ as $n$ converges to infinity:
\begin{align*}
    \forall_{\varepsilon>0} \;\; \forall_{\eta>0} \;\; \forall_{c \in \mathcal{M}(\mathcal{B})} \;\; \exists_{\widetilde{\mathcal{B}}\subset \mathcal{B}}\;\;\exists_{k \in \mathbb{N}}: \;\;\mu(\mathcal{B}\setminus \widetilde{\mathcal{B}})\leq \eta, \;\; P\left(\min_{i \in \{1,...,n\}}\|p^{(i)}_k-c\|_{\infty}\leq\varepsilon \right) \xrightarrow{n\rightarrow \infty} 1,
\end{align*}
where 
$\mathcal{M}(\mathcal{B})$ is a set of all Boolean functions that split compact set $\mathcal{B}$ into two measurable sets $\mathcal{B}_{1}$ and $\mathcal{B}_{2}$. 
\end{theorem}

\begin{proof}
The structure of the proof is similar to the one from \cite{universal_theorem_cybenko}. First, we approximate the Boolean classification function using a continuous function, then we use the result of Theorem \ref{continuous_by_random_polynom} and approximate the continuous function by a polynomial with random coefficients. \par
We seek to approximate a function $c(\boldsymbol{x})$ that is equal to $1$ if $\boldsymbol{x} \in \mathcal{B}_1$ and $-1$ if $\boldsymbol{x} \in \mathcal{B}_2$. By Lusin's theorem, on a measure space $(\mathbb{R}^d, \mathcal{B}(\mathbb{R}^d), \mu)$, where $\mu$ is a Lebesgue measure, for an arbitrary large subset of $\mathcal{B}$ there exists a continuous function $g$ that is equal to $c$ on that set. That is, for each $\eta>0$ there exists a set $\mathcal{\tilde{B}} \subset \mathcal{B}$, such that $\mu(\mathcal{B} \setminus \mathcal{\tilde{B}})<\eta$ and:
\begin{align*}
    c(\boldsymbol{x}) = g(\boldsymbol{x}), \text{ for all } \boldsymbol{x} \in \mathcal{\tilde{B}}.
\end{align*}
Note, that $\widetilde{\mathcal{B}}$ is compact and Theorem \ref{continuous_by_random_polynom} shows that there exists a polynomial with random coefficients $p^{(\hat{n})}(\boldsymbol{x})$ that is close to the continuous function on $\widetilde{\mathcal{B}}$ in supremum distance, therefore:  
\begin{align*}
     \sup_{\boldsymbol{x}\in \widetilde{\mathcal{B}}}\left|\|\boldsymbol{a}\|p^{(\hat{n})}(\boldsymbol{x})-c(\boldsymbol{x})\right| = \sup_{\boldsymbol{x}\in \widetilde{\mathcal{B}}}\left|\|\boldsymbol{a}\|p^{(\hat{n})}(\boldsymbol{x})-g(\boldsymbol{x})\right|\leq \varepsilon.
\end{align*}
\end{proof}

\begin{corollary}\label{corollary_random_threshold} 
Assuming that Bayes decision rule splits the compact support $\mathcal{B}$ into two measurable sets, the reducible error of choosing a class according to the sign of a polynomial of degree $k$ with random coefficients which has the smallest population error out of $n$ such polynomials converges to $0$ in probability as $k$ and $n$ converge to infinity:
\begin{align*}
    \forall_{\varepsilon>0}\;\; P \left(|R_{\text{popul}}\left(\sign\left(p^{(\hat{n})}\right)\right)-BE|>\varepsilon\right) \xrightarrow{n\rightarrow \infty,\;k \rightarrow \infty} 0.
\end{align*}
\end{corollary}
\begin{proof}
Let us construct a Boolean function $c(\boldsymbol{x})$ according to the Bayes decision rule. The previous theorem states that for any $\eta>0$ there exists a set $\widetilde{\mathcal{B}}$, such that $\widetilde{\mathcal{B}} \subset \mathcal{B}$ and $\mu(\mathcal{B}\setminus\widetilde{\mathcal{B}})< \eta$. And on this set:
\begin{align*}
     \sup_{\boldsymbol{x}\in \widetilde{\mathcal{B}}}\left|\|\boldsymbol{a}\|p^{(\hat{n})}(\boldsymbol{x})-c(\boldsymbol{x})\right| \leq \varepsilon
\end{align*}
for any $\varepsilon>0$ with probability that converges to $1$ as $n$ goes to infinity. If $\varepsilon < 1/2$ then
\begin{align*}
    \sign\left({p^{(\hat{n})}}(\boldsymbol{x})\right) = \sign(\|\boldsymbol{a}\|p^{(\hat{n})}(\boldsymbol{x})) = c(\boldsymbol{x}) \;\; \text{ on } \widetilde{\mathcal{B}}.
\end{align*}
That means that the decision we make according to the sign of the polynomial with random coefficients is the same as the optimal decision on $\widetilde{\mathcal{B}}$. The reducible error is therefore concentrated on the set $\mathcal{B}\setminus\widetilde{\mathcal{B}}$ and can be estimated in the following way:
\begin{align}\label{red_error}
\begin{split}
    \int_{\mathcal{B}}\mathbbm{1}\left(c(\boldsymbol{x})\neq \sign(\widehat{p_n}(\boldsymbol{x}))\right)\left(\max_i\{\rho_i(\boldsymbol{x})P(\omega_i)\}-\min_i\{\rho_i(\boldsymbol{x})P(\omega_i)\}\right)d\boldsymbol{x} = \\ =\int_{\mathcal{B} \setminus\mathcal{\widetilde{B}}}\left(\max_i\{\rho_i(\boldsymbol{x})P(\omega_i)\}-\min_i\{\rho_i(\boldsymbol{x})P(\omega_i)\}\right)d\boldsymbol{x} \leq \\
    \leq \int_{\mathcal{B} \setminus \mathcal{\widetilde{B}}}\max_i\{\rho_i(\boldsymbol{x})P(\omega_i)\}d\boldsymbol{x}\leq M\mu(\mathcal{B} \setminus \mathcal{\widetilde{B}}) \leq M \eta,
    \end{split}
\end{align}
Where $M=\sup_{\boldsymbol{x} \in \mathcal{B}}\max_i\{\rho_i(\boldsymbol{x})P(\omega_i)\}d\boldsymbol{x}$. If we want to achieve a reducible error of $\tilde{\varepsilon}$, we can therefore choose $\eta$ as $\tilde{\varepsilon}/M$.

As a consequence, for any $\tilde{\varepsilon}>0$ we can find $\eta= \eta(\tilde{\varepsilon})$, $k = k(\eta)$ and $n = n(k,d, \delta)$, so that with probability $1-\delta$ the reducible error of the classification based on the sign of the polynomial with random coefficients that is chosen out of $n$ tries is smaller than $\tilde{\varepsilon}$. That means that the reducible error of such classification converges to zero in probability. 
\end{proof}
\begin{corollary}\label{corollary}
Under the assumption that Bayes decision rule splits the support into two measurable sets, the reducible error of the method of thresholding after random projection converges to $0$ in probability as $k$ and $n$ converge to infinity.
\end{corollary}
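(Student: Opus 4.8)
The plan is to deduce this corollary from the immediately preceding Corollary~\ref{corollary_random_threshold} by a monotonicity (domination) argument, so that essentially no new analysis is required. Corollary~\ref{corollary_random_threshold} already shows that the classifier $\boldsymbol{x}\mapsto \sign(\widehat{p_n}(\boldsymbol{x}))$, obtained by selecting, out of $n$ polynomials $p_j(\boldsymbol{x})=\boldsymbol{a}^{(j)}\cdot\phi_k(\boldsymbol{x})$ with $\boldsymbol{a}^{(j)}\sim\mathrm{Unif}(S^{\tilde d-1})$, the one of smallest error, has reducible error tending to $0$ in probability as $k,n\to\infty$. The only gap between that statement and the present one is that the method of thresholding after random projection additionally optimizes the threshold $\tau$ for each projection direction, whereas the random-polynomial classifier effectively fixes the threshold at the value encoded by the random constant coefficient $a^{(j)}_0$.

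First I would make the inclusion of hypothesis classes precise. For a fixed realization of the $n$ random directions, the random-polynomial classifier $\sign(\boldsymbol{a}^{(j)}\cdot\phi_k(\boldsymbol{x}))$ is exactly the member $f_\tau$ of $\mathcal{F}_{n,k}^{(j)}=\{\sign(\boldsymbol{a}^{(j)}\cdot\phi_k(\boldsymbol{x})+\tau):\tau\in\mathbb{R}\}$ obtained by taking $\tau=0$, because the constant monomial $1$ is a coordinate of $\phi_k$ and its coefficient plays the role of a bias. Since this section assumes full knowledge of the class-conditional densities, both methods select, over their respective families, the classifier of smallest population (hence reducible) error. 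Using the \emph{same} random draw $\{\boldsymbol{a}^{(j)}\}_{j=1}^n$ for both methods produces a coupling under which, for each direction $j$, optimizing over $\tau$ can only lower the error relative to $\tau=0$; taking the minimum over $j$ then gives, almost surely under this coupling, a reducible error for thresholding after random projection that is no larger than that of the random-polynomial method.

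Consequently the reducible error of thresholding after random projection is stochastically dominated by that of the random-polynomial classifier, and the convergence established in Corollary~\ref{corollary_random_threshold} transfers directly: for every $\tilde\varepsilon>0$ the probability $P(\text{reducible error}>\tilde\varepsilon)$ is bounded above by the corresponding probability for the random-polynomial method, which tends to $0$ as $k,n\to\infty$. This yields convergence in probability of the reducible error to $0$, as claimed.

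The step I expect to require the most care is the bookkeeping that certifies the class inclusion, in particular keeping track of whether the constant monomial is carried inside $\phi_k$ or supplied separately through $\tau$, and confirming that, for each fixed direction, optimizing $\tau$ genuinely lowers (and never raises) the reducible error so that the coupling domination holds. Once that domination is in place the probabilistic conclusion is immediate, so the entire conceptual content reduces to the observation that the extra freedom of an optimized threshold can only improve the classifier relative to the already-analyzed random-polynomial construction.
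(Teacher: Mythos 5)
Your proposal is correct and takes essentially the same route as the paper: both deduce the result from Corollary~\ref{corollary_random_threshold} by coupling the same random draw of projection directions and observing that replacing the random constant coefficient by the population-error-minimizing threshold (the paper's $a_0^{\star} = \argmin_{a} R_{\text{popul}}(\sign(p_i(\boldsymbol{x})-a_0+a))$, your $\tau$-optimization with the random polynomial identified as the $\tau=0$ member of $\mathcal{F}_{n,k}^{(i)}$) can only decrease the error, so the method's best-of-$n$ classifier dominates the random-polynomial one and the convergence in probability transfers. Your careful bookkeeping of where the constant monomial sits is exactly the step the paper handles, so no gap remains.
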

\begin{proof}
According to \ref{corollary_random_threshold} on any compact set we have
\begin{align*}
\forall_{\varepsilon>0}\;\; \forall_{\delta>0}\;\;\exists_{n,k}: P\left(|R_{\text{popul}}\left(\sign\left(p^{(\hat{n})}\right)\right)-BE|>\varepsilon\right) \leq \delta/2.
\end{align*}
Let us choose a compact set $\mathcal{B}_{\delta/2}$ according to (\ref{cropped_compact}). Since Bayes decision rule splits the support into two measurable sets and $\mathcal{B}_{\delta/2}$ is measurable due to its construction, Bayes decision rule splits $\mathcal{B}_{\delta/2}$ into two measurable sets as well (intersection of two measurable sets is measurable). Then 
\begin{align*}
P\left(|R_{\text{popul}}\left(\sign\left(p^{(\hat{n})}\right)\right)-BE|>\varepsilon\right) \leq P\left(|R_{\text{popul}}^{\mathcal{B}_{\delta/2}}\left(\sign\left(p^{(\hat{n})}\right)\right)-BE^{\mathcal{B}_{\delta/2}}|>\varepsilon\right)+ \\ + P\left(\mathcal{B}_{\delta/2}^C\right) \leq \delta/2 + \delta/2.
\end{align*}
where $R_{\text{popul}}^{\mathcal{B}_{\delta/2}}(f)$ is a population error of $f$ restricted to the compact set $\mathcal{B}_{\delta/2}$ and $BE^{\mathcal{B}_{\delta/2}}$ is Bayes error restricted to the compact set $\mathcal{B}_{\delta/2}$. 
The polynomials in \ref{corollary_random_threshold} are of degree $k$ with coefficients $\boldsymbol{a}_i$ picked randomly from a unit hypersphere in $\mathbb{R}^{\tilde{d}}$. This corresponds to projecting the data on a random line and choosing the classification threshold at random.  In the method of thresholding after random projection, all the coefficients are chosen at random, apart from the threshold, which corresponds to the constant term in a polynomial classifier.  Recall that the threshold is optimized to minimize the population error. For each polynomial $p(\boldsymbol{x})$ with random coefficients we can consider a polynomial $p^{\star}(\boldsymbol{x})$ that is used by the method of thresholding after random projection. Polynomial $p^{\star}(\boldsymbol{x})$ has exactly the same coefficients as $p(\boldsymbol{x})$ apart from the absolute coefficient that is:
\begin{align*}
a_0^{\star} = \argmin_{a \in \mathbb{R}} R_{\text{popul}}(\sign(p_i(\boldsymbol{x})-a_0+a)).  
\end{align*}
Therefore
\begin{align*}
    R_{\text{popul}}(\sign(p^{\star (\hat{n})})) \leq R_{\text{popul}}(\sign(p^{(\hat{n})})) 
\end{align*}
and 
\begin{align*}
    P\left(|R_{\text{popul}}(\sign(p^{\star (\hat{n})}))-BE|> \varepsilon\right) \leq     P\left(|R_{\text{popul}}(\sign(p^{ (\hat{n})}))-BE|> \varepsilon\right)
\end{align*}
and 
\begin{align*}
    \forall_{\varepsilon>0}\;\;\forall_{\delta>0}\;\;\exists_{n,k}: P\left(|R_{\text{popul}}(\sign(p^{\star (\hat{n})}))-BE|> \varepsilon\right) \leq \delta,
\end{align*}
which means that the reducible error of thresholding after random projection converges to $0$ when $n$ and $k$ converge to infinity.
\end{proof}

In the proof of Theorem \ref{polynomial_case} we show, that for each reducible error $\varepsilon>0$ there exists a degree $k = k(\varepsilon)$ and an optimal polynomial $p(\boldsymbol{x})$ of degree $k$, such that if we decide based on the sign of this polynomial, the reducible error will be smaller than $\varepsilon$. In order to get close to this optimal polynomial with random ones, one needs to have a certain number of projections to choose from. That number depends on the degree $k$, the dimensionality of the original data $d$, and the tolerance $\delta$: $n=n(k,d,\delta)$. The following lemma estimates how large $n$ has to be to guarantee that with probability $1-\delta$ the reducible error is smaller than $\varepsilon$, assuming that $k(\varepsilon)$ is big enough. 
\begin{theorem}\label{howbign}
Let $k=k(\varepsilon)$ be a feature extension degree that is high enough, in principle, to obtain a reducible error smaller than $\varepsilon$. 
In order to achieve such a small error with probability at least $1-\delta$, the number of projections $n$ has to be no smaller than \begin{align}\label{howlargen}
     n \geq \dfrac{\ln(\delta)}{\ln\left(1-I_{\sin^2\left(4\arcsin\left(\dfrac{1}{8\tilde{d}}\right)\right)}\left(\dfrac{\tilde{d}-1}{2}, \dfrac{1}{2}\right)\right)},
\end{align}
where $ \tilde{d} = \binom{d+k}{k}$ and $I_{\phi}(a,b)$ is a regularized incomplete beta function.
\end{theorem}
\begin{proof}

The following analysis uses formulas for the area of hyperspherical cap from \cite{area_hyper_cap_Li}. Since $\boldsymbol{a}$ is chosen from the unit sphere in $\tilde{d}$ dimensions uniformly at random, we are interested in the probability of missing a desirable set (two hyperspherical caps) that is close enough to the optimal vector of coefficients $\tilde{\boldsymbol{a}}$. The area of a cap given by an angle $\phi$ is 
\begin{align*}
    A_{\tilde{d}}^{\text{cap}} = \dfrac{2\pi^{\frac{\tilde{d}-1}{2}}}{\Gamma\left(\frac{\tilde{d}-1}{2}\right)}\int_{0}^{\phi}\sin^{\tilde{d}-2}(\theta)d\theta.
\end{align*}
The probability that we will end up in the cap is thus given by 
\begin{align*}
    \dfrac{A_{\tilde{d}}^{\text{cap}}}{A_{\tilde{d}}} = \dfrac{1}{2}I_{\sin^2{\phi}}\left(\dfrac{\tilde{d}-1}{2}, \dfrac{1}{2}\right) = \dfrac{B\left(\sin^2{\phi};\dfrac{\tilde{d}-1}{2},\dfrac{1}{2}\right)}{2 B\left(\dfrac{\tilde{d}-1}{2},\dfrac{1}{2}\right)},
\end{align*}
where $I_{\sin^2{\phi}}$ is a regularized incomplete beta function - a cumulative distribution function of the beta distribution. The projection direction given by the optimal vector $\boldsymbol{\tilde{a}}$  is the same as the one given by $-\boldsymbol{\tilde{a}}$. Therefore a good reducible error will not be achieved if we miss two caps that are symmetric with respect to the origin. The probability of missing the neighborhoods is given by:
\begin{align*}
     1-I_{\sin^2\phi}\left(\dfrac{\tilde{d}-1}{2}, \dfrac{1}{2}\right).
\end{align*}
We want to find $n$ such that for a given $\delta$ the following inequality holds:
\begin{align*}
      P(\text{missing the caps after $n$ tries}) = \left(1-I_{\sin^2\phi}\left(\dfrac{\tilde{d}-1}{2}, \dfrac{1}{2}\right)\right)^{n} \leq \delta,
\end{align*}
which is equivalent to 
\begin{align*}
    n \geq \dfrac{\ln(\delta)}{\ln\left(1-I_{\sin^2\phi}\left(\dfrac{\tilde{d}-1}{2}, \dfrac{1}{2}\right)\right)}.
\end{align*}
The angle $\phi$ has to be such, that the supremum distance between the chosen vector $\boldsymbol{a}^{(n)}$ and the optimal projection direction $\boldsymbol{\tilde{a}}$ is bounded from above. Due to Corollary \ref{corollary_random_threshold} and Theorem \ref{continuous_by_random_polynom} we need
\begin{align}\label{first_bound}
    \|\boldsymbol{a}^{(n)}-\boldsymbol{\tilde{a}}\|_{\infty} \leq \dfrac{1}{4\tilde{d}\widetilde{B}^k_{\star}\|\boldsymbol{a}\|},
\end{align}
where $\widetilde{B}=\diam({\widetilde{\mathcal{B}}})$ and is bounded above by $B=\diam(\mathcal{B})$. Therefore we can replace (\ref{first_bound}) with the following condition:
\begin{align*}
       \|\boldsymbol{a}^{(n)}-\boldsymbol{\tilde{a}}\|_{\infty} \leq \dfrac{1}{4\tilde{d}{B}^k_{\star}\|\boldsymbol{a}\|}.
\end{align*}
Since the supremum distance is bounded above by $L_2$ distance, we can impose conditions on $\|\boldsymbol{a}^{(n)}-\boldsymbol{\tilde{a}}\|_2 $. The connection between the angle $\phi$ and distance between the vector at the border of the cap and the optimal vector $\widetilde{\boldsymbol{a}}$ is the following: 
\begin{align*}
    \sin\left(\frac{\phi}{4}\right) = \dfrac{\|\boldsymbol{a}^{(n)}-\boldsymbol{\tilde{a}}\|_2}{2},
\end{align*}
therefore $\phi$ must satisfy
\begin{align*}
    \phi \leq 4\arcsin\left({\dfrac{1}{8\tilde{d}B_{\star}^k\|\boldsymbol{a}\|}}\right).
\end{align*}
As a consequence we need $n$ bigger than
\begin{align*} 
        n \geq \dfrac{\ln(\delta)}{\ln\left(1-I_{\sin^2\left(4\arcsin\left(\dfrac{1}{8\|{\boldsymbol{a}}\|{\tilde{d}}B^{k}_{\star}}\right)\right)}\left(\dfrac{\tilde{d}-1}{2}, \dfrac{1}{2}\right)\right)}.
\end{align*}
Assuming that $\|{\boldsymbol{a}}\| = 1$ and $B=1$ we get:
\begin{align*}
    n \geq \dfrac{\ln(\delta)}{\ln\left(1-I_{\sin^2\left(4\arcsin\left(\dfrac{1}{8{\tilde{d}}}\right)\right)}\left(\dfrac{\tilde{d}-1}{2}, \dfrac{1}{2}\right)\right)}.
\end{align*}
\end{proof}

\section{Restricted number of projections needed for a mixture of two Gaussians} \label{restricted_number}
In general, according to Formula \ref{howlargen}, the number of projections $n$ needed for achieving a high classification accuracy could be very large,
which makes the  generalization bounds of Theorem \ref{our_chaining_result} unattractive.
However, for certain classification problems, $n$ might be quite small. An example of this is when the classes are distributed following two far away Gaussian distributions, as discussed in \cite{boutin2024}.

The reason for this can be explained using the Rashomon ratio. To be specific, consider the  following data generating distribution:
\begin{align*}
\bs{X}\mid(Y=r) \sim \mathcal{N}_d(\bs{\mu}_r, \sigma^2\mathbb{I}),
\end{align*}
 where $r \in \{1,2\}$, $\mathbb{I}$ is an identity matrix of dimensionality $d$ and $P(Y=r)=1/2$ for both $r$. 
As discussed in \cite{coupkova2024rashomon}, the Rashomon ratio of such a mixture for $d=1$, $\sigma=1$, $\bs{\mu}_1=-2.5$ and $\bs{\mu}_2=2.5$ is approximately $58\%$. 
Such a high Rashomon ratio means that, with a high probability, the number of functions $n$ that has to be drawn in order to achieve a classification with high accuracy is very small. 
In particular, with probability at least $0.96$, only $7$ functions are sufficient 
to achieve a training error of the method of thresholding after random projection that is less than
 \begin{align*}
     \inf_{f \in \mathcal{F}}R_{\text{train}}(f) \leq \inf_{f \in \mc{F}_{\text{afine}}}R_{\text{popul}}(f)+0.05+\sqrt{\frac{1}{2N}\ln\left(\frac{1}{0.01}\right)},
\end{align*}
where $\inf_{f \in \mc{F}_{\text{affine}}}R_{\text{popul}}(f)$ is the Bayes error for an antipodal mixture of Gaussian distributions, which is equal to $0$ for affine classifiers.
These ideas are explored and extended in  \cite{coupkova2024rashomon}.

\bibliographystyle{plain}
\bibliography{bibliography.bib}
\end{document}